\crefname{nlem}{Lemma}{Lemmas}
\crefname{proposition}{Prop.}{Props.}
\crefname{ncor}{Corollary}{Corollaries}
\crefname{nthm}{Theorem}{Theorems}
\crefname{exa}{Example}{Examples}
\crefname{assumption}{Assumption}{Assumptions}
\crefname{appendix}{App.}{Apps.}
\crefname{equation}{}{}
\crefname{enumi}{}{}
\newcommand{\Lip}{\operatorname{Lip}}
\newcommand{\eps}{\epsilon}
\newcommand{\sig}{\sigma}
\def\balign#1\ealign{\begin{align}#1\end{align}}
\def\baligns#1\ealigns{\begin{align*}#1\end{align*}}
\def\balignat#1\ealign{\begin{alignat}#1\end{alignat}}
\def\balignats#1\ealigns{\begin{alignat*}#1\end{alignat*}}
\def\bitemize#1\eitemize{\begin{itemize}#1\end{itemize}}
\def\benumerate#1\eenumerate{\begin{enumerate}#1\end{enumerate}}
\newenvironment{talign*}
 {\csname align*\endcsname}
 {\endalign}
\newenvironment{talign}
 {\csname align\endcsname}
 {\endalign}
\def\balignst#1\ealignst{\begin{talign*}#1\end{talign*}}
\def\balignt#1\ealignt{\begin{talign}#1\end{talign}}
\newcommand{\qtext}[1]{\quad\text{#1}\quad} 
\let\originalleft\left
\let\originalright\right
\renewcommand{\left}{\mathopen{}\mathclose\bgroup\originalleft}
\renewcommand{\right}{\aftergroup\egroup\originalright}
\def\Matern{Mat\'ern\xspace}
\def\tinycitep*#1{{\tiny\citep*{#1}}}
\def\tinycitealt*#1{{\tiny\citealt*{#1}}}
\def\tinycite*#1{{\tiny\cite*{#1}}}
\def\smallcitep*#1{{\scriptsize\citep*{#1}}}
\def\smallcitealt*#1{{\scriptsize\citealt*{#1}}}
\def\smallcite*#1{{\scriptsize\cite*{#1}}}
\def\mbb#1{\mathbb{#1}}
\def\textsum{{\textstyle\sum}} %
\def\reals{\mathbb{R}} %
\def\naturals{\mathbb{N}} %
\def\<{\left\langle} %
\def\>{\right\rangle}
\def\defeq{\triangleq} %
\def\half{\frac{1}{2}}
\newcommand{\textfrac}[2]{{\textstyle\frac{#1}{#2}}}
\def\norm#1{\left\|{#1}\right\|} %
\newcommand{\twonorm}[1]{\norm{#1}_2} %
\newcommand{\infnorm}[1]{\norm{#1}_{\infty}} %
\newcommand{\opnorm}[1]{\norm{#1}_{\mathrm{op}}} %
\def\staticnorm#1{\|{#1}\|} %
\newcommand{\inner}[2]{\langle{#1},{#2}\rangle} %
\def\indic#1{\mbb{I}\left[{#1}\right]} %
\def\maxarg#1{\max\left({#1}\right)} %
\def\E{\mbb{E}} %
\def\Earg#1{\E\left[{#1}\right]}
\def\Esubarg#1#2{\E_{#1}\left[{#2}\right]}
\def\P{\mbb{P}} %
\newcommand{\Gsn}{\mathcal{N}}
\newcommand{\Ber}{\textnormal{Ber}}
\newcommand{\grad}{\nabla} %
\newcommand{\pderiv}[2]{\frac{\partial #1}{\partial #2}} %
\newcommand{\toas}{\stackrel{a.s.}{\to}}
\newcommand{\iid}{\textrm{i.i.d.}\xspace}
\newcommand{\dist}{\sim}
\newcommand{\distiid}{\overset{\textrm{\tiny\iid}}{\dist}}
\newtheorem{theorem}{Theorem}
\newtheorem{lemma}[theorem]{Lemma}
\renewenvironment{proof}{\noindent\textbf{Proof}\hspace*{1em}}{\qed\\}
\newenvironment{proof-sketch}{\noindent\textbf{Proof Sketch}
  \hspace*{1em}}{\qed\bigskip\\}
\newenvironment{proof-idea}{\noindent\textbf{Proof Idea}
  \hspace*{1em}}{\qed\bigskip\\}
\newenvironment{proof-of-lemma}[1][{}]{\noindent\textbf{Proof of Lemma {#1}}
  \hspace*{1em}}{\qed\\}
\newenvironment{proof-of-theorem}[1][{}]{\noindent\textbf{Proof of Theorem {#1}}
  \hspace*{1em}}{\qed\\}
\newenvironment{proof-attempt}{\noindent\textbf{Proof Attempt}
  \hspace*{1em}}{\qed\bigskip\\}
\newtheorem{proposition}[theorem]{Proposition}
\newcommand{\punt}[1]{}
\newcommand{\io}{\textrm{ i.o.}}
\newcommand{\bq}{\begin{equation}}
\newcommand{\eq}{\end{equation}}
\newcommand{\ba}{\begin{eqnarray}}
\newcommand{\ea}{\end{eqnarray}}
\newcommand{\remove}[1]{}
\newcommand{\xset}[0]{\mathcal{X}} %
\newcommand{\gset}[0]{\mathcal{G}} %
\newcommand{\steinset}[0]{\steinsetarg{}} %
\newcommand{\steinsetarg}[1]{\gset_{\norm{\cdot}_{#1}}} %
\newcommand{\gsteinset}[2]{\gset_{\norm{\cdot}_{#1},Q_n,{#2}}} %
\newcommand{\ksteinset}[1]{\gset_{#1}} %
\newcommand{\ksteinsetnorm}[2]{\gset_{#1,#2}} %
\newcommand{\knormarg}[2]{\norm{#1}_{\kset_{#2}}}
\newcommand{\knorm}[1]{\knormarg{#1}{k}}
\newcommand{\hset}[0]{\mathcal{H}} %
\newcommand{\kset}[0]{\mathcal{K}} %
\newcommand{\blset}{BL_{\norm{\cdot}}} %
\newcommand{\operator}[1]{\mathcal{T}{#1}} %
\newcommand{\oparg}[2]{(\operator{#1})({#2})} %
\newcommand{\opsub}[1]{\mathcal{T}_{#1}} %
\newcommand{\opsubarg}[3]{(\opsub{#1}{#2})({#3})} %
\newcommand{\langevin}[1]{\mathcal{T}_P{#1}} %
\newcommand{\langarg}[2]{(\langevin{#1})({#2})} %
\newcommand{\stein}[3]{\mathcal{S}({#1},{#2},{#3})} %
\newcommand{\opstein}[2]{\stein{#1}{\operator{}}{#2}} %
\newcommand{\langstein}[2]{\stein{#1}{\langevin{}}{#2}} %
\newcommand{\supnorm}[1]{\norm{#1}_{\infty}} %
\renewcommand{\norm}[1]{\staticnorm{#1}}
\renewcommand{\indic}[1]{\mbb{I}[#1]}
\newcommand{\ssd}[1]{\mathcal{SS}({#1},\operator{},\gset)} %
\newcommand{\ssdn}[1]{\mathcal{SS}({#1},\operator{},\gset_n)} %
\newcommand{\langssdn}[1]{\mathcal{SS}({#1},\langevin{},\gset_n)} %
\newcommand{\sksd}[1]{\mathcal{SS}({#1},\langevin{},{\ksteinsetnorm{k}{\norm{\cdot}}})} %
\newcommand{\twosksd}[1]{\mathcal{SS}({#1},\langevin{},{\ksteinsetnorm{k}{\twonorm{\cdot}}})} %
\newcommand{\twoksd}[1]{\mathcal{S}({#1},\langevin{},{\ksteinsetnorm{k}{\twonorm{\cdot}}})} %
\newcommand{\sset}{\sigma} %
\newcommand{\equisubs}[2]{{[#1]\choose #2}}
\newcommand{\lswass}[1]{W_{#1}} %
\newcommand{\event}{\mathcal{E}} %
\title{Stochastic Stein Discrepancies}
\author{
Jackson Gorham \\
Whisper.ai, Inc \\
\texttt{jackson@whisper.ai}
\And
Anant Raj \\
MPI for Intelligent Systems \\
T\"ubingen, Germany \\
\texttt{anant.raj@tuebingen.mpg.de}
\And 
Lester Mackey \\
Microsoft Research New England  \\
\texttt{lmackey@microsoft.com}
}
\begin{document}
\maketitle

\begin{abstract}
Stein discrepancies (SDs) %
monitor convergence and non-convergence 
in approximate inference when exact integration and sampling are intractable.
However, the computation of a Stein discrepancy can be prohibitive
if the Stein operator -- often a sum over likelihood terms or potentials -- is expensive to evaluate. 
To address this deficiency, we show that \emph{stochastic Stein discrepancies} (SSDs) based on subsampled approximations of the Stein operator inherit the convergence control properties of standard SDs with probability $1$.
Along the way, we establish the convergence of Stein variational gradient descent (SVGD) on unbounded domains, resolving an open question of Liu (2017).
In our experiments with biased Markov chain Monte Carlo (MCMC) hyperparameter tuning, approximate MCMC sampler selection, and stochastic SVGD, 
SSDs deliver comparable inferences to standard SDs with orders of magnitude fewer likelihood evaluations.
\end{abstract}
\section{Introduction}\label{sec:intro}
Markov chain Monte Carlo (MCMC) methods \cite{BrooksGeJoMe11} provide asymptotically correct sample estimates $\textfrac{1}{n}\sum_{i=1}^n h(x_i)$ of the complex integrals $\Esubarg{P}{h(Z)} = \int h(z)dP(z)$ that arise in Bayesian inference, maximum likelihood estimation \cite{Geyer91}, and probabilistic inference more broadly.
However, MCMC methods often require cycling through a
large dataset or a large set of factors to produce each new sample point $x_i$.
To avoid this computational burden, many have turned to scalable approximate MCMC methods
\citep[e.g.][]{WellingTe11, Ahn2012, PattersonTe13, Chen2014, Dubois2014},
which mimic standard MCMC procedures while using only a small subsample of datapoints to generate each new sample point. 
These techniques reduce Monte Carlo variance by delivering larger sample sizes in less time but sacrifice %
asymptotic correctness %
by introducing a persistent bias. 
This bias 
creates new difficulties for sampler monitoring, selection, and hyperparameter tuning, as standard MCMC diagnostics, like trace plots and effective
sample size, 
rely upon asymptotic exactness.
To effectively assess the quality of approximate MCMC outputs,
a line of work \citep{GorhamMa15,MackeyGo16,GorhamDuVoMa19,GorhamMa17,HugginsMa2018,ChenMaGoFXOa18} developed computable \emph{Stein discrepancies} (SDs) that quantify the maximum discrepancy between sample and target expectations and provably track sample convergence to the target $P$, even when explicit integration and direct sampling from $P$ are intractable.
SDs have since been used to compare approximate MCMC procedures \citep{AicherPuNeFeFo19}, test goodness of fit \cite{ChwialkowskiStGr2016,LiuLeJo16,jitkrittum2017linear,HugginsMa2018}, train generative models \cite{WangLi2016,PuGaHeChHaCa2017}, 
generate particle approximations
\cite{ChenMaGoFXOa18,futami2019bayesian,ChenBaFXGoGiMaOa19},
improve particle approximations \cite{LiuWa2016stein,LiuLe2016,hodgkinson2020reproducing},  compress samples \cite{RiabizChCoSwNiMaOa2020}, conduct variational inference \cite{Ranganath2016}, and estimate parameters in intractable models \cite{BarpFXDuGiMa2019}.

However, the computation of the Stein discrepancy itself can be prohibitive if the Stein operator applied at each datapoint -- often a sum over datapoint likelihoods or factors -- is expensive to evaluate.
This expense has led some users to 
heuristically approximate Stein discrepancies by subsampling data points \cite{LiuWa2016stein,Ranganath2016,AicherPuNeFeFo19}. In this paper, we formally justify this practice by proving that \emph{stochastic Stein discrepancies} (SSDs) based on subsampling inherit the desirable convergence-tracking properties of standard SDs with probability $1$.
We then apply our techniques to analyze a scalable stochastic variant of the popular Stein variational gradient descent (SVGD) algorithm \citep{LiuWa2016stein} for particle-based variational inference.
Specifically, we generalize the compact-domain convergence results of \citet{liu2017stein} to show, first, that SVGD converges on unbounded domains and, second, that stochastic SVGD (SSVGD) converges to the same limit as SVGD with probability $1$.
We complement these results with a series of experiments illustrating the application of SSDs 
to biased MCMC hyperparameter tuning, approximate MCMC sampler selection, and particle-based variational inference. 
In each case, we find that SSDs deliver inferences equivalent to or more accurate than standard SDs with orders of magnitude fewer datapoint accesses.

The remainder of the paper is organized as follows. 
In \cref{sec:sample_quality}, we review standard desiderata and past approaches for measuring the quality of a sample approximation.
In \cref{sec:st_stein}, we provide a formal definition of stochastic Stein discrepancies for scalable sample quality measurement and present a stochastic SVGD algorithm for scalable particle-based variational inference.
We provide probability $1$ convergence guarantees for SSDs and SSVGD in \cref{sec:theory} and demonstrate their practical value in
\cref{sec:experiments}. We discuss our findings and posit directions for future work in \cref{sec:conclusion}.

\paragraph{Notation}
For vector-valued $g$ on $\xset\subseteq \reals^d$, we
define the expectation $\mu(g) \defeq \int g(x) d\mu(x)$ for each probability measure $\mu$, 
the divergence $\inner{\grad}{g(x)} \defeq \sum_{j=1}^d \pderiv{}{x_j}g_j(x)$, and 
the $\twonorm{\cdot}$ boundedness and Lipschitzness parameters  $\supnorm{g} \defeq
\sup_{x\in\reals^d}\twonorm{g(x)}$ and $\Lip(g) \defeq
\sup_{x\neq y\in\xset}\frac{\twonorm{g(x) - g(y)}}{\twonorm{x-y}}$. 
For any matrix $A$, let $\opnorm{A}\defeq \sup_{\twonorm{x}\le 1} \twonorm{A x}$ be
the operator norm of $A$.
For any $L\in\naturals$, we write $[L]$ for $\{1,\dots,L\}$.
We write $\Rightarrow$ for the weak convergence and $\toas$ for almost sure
convergence of probability measures. We denote the set of continuous
functions and continuously differentiable functions on
$\xset$ as $C(\xset)$ and $C^1(\xset)$ respectively, and use the shorthand
$C$ and $C^1$ whenever $\xset=\reals^d$. We also denote the set of functions on
$\reals^d\times\reals^d$ continuously differentiable in both arguments by
$C^{(1,1)}$.
\section{Measuring Sample Quality} \label{sec:sample_quality}
Consider a target distribution $P$ supported on
$\xset\subseteq\reals^d$. We assume that
exact expectations under $P$ are unavailable for many functions of interest,
so we will an employ a discrete measure $Q_n \defeq
\textfrac{1}{n}\sum_{i=1}^n \delta_{x_i}$ based on a sample $(x_i)_{i=1}^n$
to approximate expectations under $P$.  Importantly, we will make no
assumptions on the origins or nature of the sample points $x_i$; they may be
the output of \iid sampling, drawn from an arbitrary Markov chain, or even
generated by a deterministic quadrature rule.

To assess the usefulness of a given sample, we seek a quality measure that quantifies how well expectations under $Q_n$ match those under $P$.
At the very least, this quality measure should
\emph{(i)} determine when $Q_n$ converges to the
target $P$, \emph{(ii)} determine when $Q_n$ does not converge to $P$, and \emph{(iii)} be computationally tractable.
\emph{Integral probability
  metrics} (IPMs) \citep{Muller97} are natural candidates, as they measure the maximum absolute difference in
expectation between probability measures $\mu$ and $\nu$ over a set of test functions $\hset$:
\begin{equation}
  d_{\hset}(\mu, \nu) \defeq \sup_{h\in\hset} |\Esubarg{\mu}{h(X)} - \Esubarg{\nu}{h(Z)}|.
\end{equation}
Moreover, for many IPMs, like the Wasserstein distance ($\hset = \{h:\xset\to\reals\, |\, \Lip(h) \le 1\}$) and the Dudley metric ($\hset = \{h:\xset\to\reals\, |\, \supnorm{h} + \Lip(h) \le 1\}$), convergence of $d_{\hset}(Q_n, P) \to 0$ implies that $Q_n\Rightarrow P$, in satisfaction of Desideratum (ii).
Unfortunately, these same IPMs typically cannot be computed without exact integration under $P$. 
\citet{GorhamMa15} circumvented this issue by constructing a new family of IPMs -- \emph{Stein discrepancies} -- from test functions known a priori to be mean zero under $P$.
Their construction was inspired by Charles Stein's three-step method for proving central limit theorems \cite{Stein72}:
\begin{enumerate}[leftmargin=.7cm]
\item Identify an operator $\operator{}$ that generates mean-zero functions on its domain $\gset$:
  \begin{align}\label{eq:mean-zero-operator}
    \Esubarg{P}{\oparg{g}{Z}} = 0 \text{ for any } g\in\gset.
  \end{align}
  The chosen \emph{Stein operator} $\operator{}$ and \emph{Stein set} $\gset$ together yield an IPM-type measure which eschews explicit integration under $P$:
\begin{align}\label{eqn:sd}
    \opstein{\mu}{\gset} \defeq d_{\operator{\gset}}(\mu, P)
      = \sup_{g\in\gset}|\Esubarg{\mu}{(\operator{g})(X)} - \Esubarg{P}{(\operator{g})(Z)}|
      = \sup_{g\in\gset}|\Esubarg{\mu}{(\operator{g})(X)}|.
  \end{align}
\citet{GorhamMa15} named this measure the \emph{Stein discrepancy}. %
  
\item Lower bound the Stein discrepancy by an IPM known to dominate
  convergence in distribution. This is typically done for a large class
  of targets once and thus ensures that $\opstein{Q_n}{\gset} \to
  0$ implies $Q_n \Rightarrow P$ (Desideratum \emph{(ii)}).
\item Upper bound the Stein discrepancy to ensure that the Stein discrepancy
  $\opstein{Q_n}{\gset}\to 0$ when $Q_n$ converges suitably to
  $P$ (Desideratum \emph{(i)}).
\end{enumerate}
Prior work has instantiated a variety of Stein operators $\operator{}$ and Stein sets $\gset$ satisfying Desiderata \emph{(i)-(iii)} for large classes of target distributions
\cite{Stein72, SteinDiHoRe04, GorhamMa15, MackeyGo16, GorhamDuVoMa19, GorhamMa17, HugginsMa2018, ChenMaGoFXOa18,ChenBaFXGoGiMaOa19,erdogdu2018global}. 
We will focus on \emph{decomposable operators}: $\operator{} = \sum_{l=1}^L \opsub{l}$ that 
decompose as a sum of $L$ base operators $\opsub{l}$ that are less expensive to evaluate than $\operator{}$. 
A prime example is the \emph{Langevin Stein operator} derived in \cite{GorhamMa15},
\begin{equation}\label{eqn:def-langevin-stein-operator}
  \langarg{g}{x} = \inner{\grad\log p(x)}{g(x)} + \inner{\grad}{g(x)},
\end{equation}
applied to a differentiable posterior density $p(x) \propto \pi_0(x)\prod_{l=1}^L \pi(y_l | x)$ on $\reals^d$ for $\pi_0$ a prior density, $\pi(\cdot | x)$  a  likelihood function, and $(y_l)_{l=1}^L$ a sequence of observed datapoints.
In this case, the Langevin operator $\langevin{} = \sum_{l=1}^L \opsub{l}$ for 
\balignt\label{eq:langevin-posterior}
(\opsub{l}g)(x) =  \inner{\grad\log
  p_l(x)}{g(x)} + \textfrac{1}{L}\inner{\grad}{g(x)}
\qtext{and} 
p_l(x) \defeq \pi_0(x)^{1/L}\pi(y_l | x),
\ealignt
so that each base operator involves accessing only a single datapoint.

\section{Stochastic Stein Discrepancies} \label{sec:st_stein}

Whenever the Stein operator decomposes as $\operator{} = \sum_{l=1}^L \opsub{l}$, the standard Stein discrepancy (SD) objective \cref{eqn:sd} demands that every base operator $\opsub{l}$ be evaluated at every sample point $x_i$; this cost can quickly become prohibitive if $L$ and $n$ are large. %
To alleviate this burden, we will consider a new class of discrepancy measures based on subsampling base operators. 
We emphasize that our aim in doing so is \emph{not} to approximate standard SDs but rather to develop more practical alternative discrepancy measures that control convergence in their own right.
To this end, we fix a batch size $m$ and, for each $i \in [n]$, independently select a uniformly random subset $\sset_i$ of size $m$ from $[L]$. 
Then for any
$\gset$, we define the \emph{stochastic Stein discrepancy} (SSD) as the random quantity
\begin{talign}\label{ssd_subset}
    \ssd{Q_n} 
        &\defeq \sup_{g\in\gset}\left |\frac{1}{n}\sum_{i=1}^n
    \frac{L}{m}\opsubarg{\sset_i}{g}{x_i}\right |,
\end{talign}
where, for each $\sset\subseteq [L]$, we introduce the subset operator
$\opsub{\sset} \defeq \sum_{l \in \sset} \opsub{l}$.
In our running example of the Langevin posterior decomposition \cref{eq:langevin-posterior}, we have
\balignt
(\opsub{\sigma}g)(x) =  \inner{\grad\log
  p_\sigma(x)}{g(x)} + \textfrac{m}{L}\inner{\grad}{g(x)}
\qtext{for} 
p_\sigma(x) \defeq \pi_0(x)^{m/L} \prod_{l\in\sigma} \pi(y_l | x),
\ealignt
so that each subset operator processes only a minibatch of $m$ datapoints.

By construction, the SSD reduces the number of base operator evaluations by
a factor of $m/L$.  Nevertheless, we will see in the \cref{sec:theory} that
SSDs inherit the convergence-determining properties of standard SDs with
probability $1$.
Notably, the continued detection of convergence and non-convergence to $P$ is made possible by the use of an independent subset $\sigma_i$ per sample point.
If, for example, the same minibatch of $m$ datapoints were used for all sample points instead, then the resulting discrepancy would determine convergence to an incorrect posterior conditioned on that minibatch rather than to the desired target $P$.

\subsection{Stochastic kernel Stein discrepancies}

Before turning to the convergence theory we pause to highlight a second property of practical import: when the Stein set is a unit ball of a reproducing kernel Hilbert space (RKHS), the SSD \cref{eqn:ssd-definition} admits a closed-form solution.  
We illustrate this for the Langevin operator \cref{eqn:def-langevin-stein-operator}
and the \emph{kernel Stein set}~\cite{GorhamMa17}
\balignt\label{eq:kernel-stein-set}
\ksteinsetnorm{k}{\norm{\cdot}} \defeq \{ g = (g_1,\dots, g_d) \mid \norm{v}^*\le 1 \text{ for } v_j \defeq \knorm{g_j} \}
\ealignt
with arbitrary vector norm $\norm{\cdot}$ and $\knorm{\cdot}$ the RKHS norm of a reproducing kernel $k$ on $\reals^d\times\reals^d$.

\begin{proposition}[SKSD closed form]\label{sksd-formula}
If $k\in C^{(1,1)}$, then $\sksd{Q_n} =\norm{w}$ where, $\forall j \in [d]$,
\balignt
w_j^2 \defeq \frac{1}{n^2} \sum_{i=1}^n\sum_{i'=1}^n (\frac{L}{m}\grad_{x_{ij}}\log p_{\sig_i}(x_i) + \grad_{x_{ij}})(\frac{L}{m}\grad_{x_{i'j}}\log p_{\sig_{i'}}(x_{i'}) +\grad_{x_{i'j}})k(x_i,x_{i'}).
\ealignt
\end{proposition}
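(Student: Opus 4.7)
The plan is to follow the well-trodden RKHS closed-form derivation for the kernel Stein discrepancy, adapted to accommodate the stochastic subset operators $\opsub{\sigma_i}$ and the arbitrary vector norm $\|\cdot\|$ used to define $\ksteinsetnorm{k}{\norm{\cdot}}$. The payoff of this route is that, once the objective is expressed as an RKHS inner product, the supremum collapses to a dual-norm of $(\knorm{g_j})_{j\in[d]}$ values, which can be read off in closed form.

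First I would unfold the definition of the Langevin subset operator by writing, for each sample point $x_i$ and coordinate $j$,
\[
\textfrac{L}{m}(\opsub{\sigma_i}g)(x_i) = \textsum_{j=1}^d\bigl[\tfrac{L}{m}(\grad_{x_{ij}}\log p_{\sigma_i}(x_i))\,g_j(x_i) + \partial_{x_{ij}}g_j(x_i)\bigr].
\]
Since $k\in C^{(1,1)}$, the standard reproducing properties $g_j(x)=\inner{g_j}{k(x,\cdot)}_{\ch_k}$ and $\partial_{x_j}g_j(x)=\inner{g_j}{\partial_{x_j}k(x,\cdot)}_{\ch_k}$ hold (see, e.g., Steinwart \& Christmann, Lemma~4.34), which lets me rewrite the averaged objective as $\sum_j \inner{g_j}{u_j}_{\ch_k}$ with the coordinate features
\[
u_j \;\defeq\; \frac{1}{n}\sum_{i=1}^n\Bigl[\tfrac{L}{m}(\grad_{x_{ij}}\log p_{\sigma_i}(x_i))\,k(x_i,\cdot) + \partial_{x_{ij}}k(x_i,\cdot)\Bigr]\in\ch_k.
\]

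Next I would evaluate the supremum. By Cauchy--Schwarz in each coordinate, $|\sum_j \inner{g_j}{u_j}_{\ch_k}|\le \sum_j \knorm{g_j}\knorm{u_j}$, and equality is attained by the choice $g_j = (\knorm{g_j}/\knorm{u_j})\,u_j$ whenever $\knorm{u_j}>0$. Writing $v_j\defeq\knorm{g_j}$ and $w_j\defeq\knorm{u_j}$, the Stein-set constraint $\norm{v}^*\le 1$ reduces the problem to $\sup_{\norm{v}^*\le 1}\inner{v}{w} = \norm{w}$ by the definition of the dual norm, giving $\ssd{Q_n}=\norm{w}$.

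Finally I would verify the stated formula by expanding $w_j^2 = \inner{u_j}{u_j}_{\ch_k}$. The reproducing identities
\[
\inner{k(x_i,\cdot)}{k(x_{i'},\cdot)}_{\ch_k}=k(x_i,x_{i'}),\quad
\inner{k(x_i,\cdot)}{\partial_{x_{i'j}}k(x_{i'},\cdot)}_{\ch_k}=\partial_{x_{i'j}}k(x_i,x_{i'}),
\]
and the analogous mixed-derivative identity produce exactly the four summands in the proposition, which regroup as the operator product $(\tfrac{L}{m}\grad_{x_{ij}}\log p_{\sigma_i}(x_i)+\grad_{x_{ij}})(\tfrac{L}{m}\grad_{x_{i'j}}\log p_{\sigma_{i'}}(x_{i'})+\grad_{x_{i'j}})k(x_i,x_{i'})$. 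The only conceptual subtlety, and what I expect to be the main (if modest) obstacle, is justifying the derivative reproducing identities under the minimal regularity $k\in C^{(1,1)}$; everything else is algebra and Cauchy--Schwarz duality.
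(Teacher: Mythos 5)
Your proposal is correct and follows essentially the same route as the paper's proof: coordinate-wise derivative reproducing identities (justified via Steinwart--Christmann under $k\in C^{(1,1)}$, exactly as the paper does with Cor.~4.36 and Lem.~4.34), reduction of the objective to $\sum_j \inner{g_j}{u_j}_{\kset_k}$, and collapse of the supremum by Cauchy--Schwarz plus dual-norm duality to $\norm{w}$ with $w_j^2=\knorm{u_j}^2$ expanded into the stated double sum. No gaps.
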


We call such discrepancies \emph{stochastic kernel Stein discrepancies} (SKSDs) in homage to the standard kernel Stein discrepancies (KSDs) introduced in \citep{ChwialkowskiStGr2016,LiuLeJo16,GorhamMa17}.
See \cref{sec:proof-sksd-formula} for the proof of \cref{sksd-formula}.

\paragraph{Related work} 
Several research groups have stochastically approximated kernel-based Stein
sets $\gset$ to reduce the $\Omega(n^2)$ computational expense of goodness-of-fit testing
\citep{jitkrittum2017linear,HugginsMa2018}, measuring sample quality
\citep{HugginsMa2018}, and improving sample quality with Stein variational
gradient descent \citep{li2019stochastic} while leaving the original
operator $\operator{}$ unchanged.
Others have improved communication efficiency by deploying standard SDs with special Stein sets featuring low-dimensional coordinate-dependent kernels \citep{WangZeLi18,ZhuoLiShZhChZh18}.
Here we focus on the distinct and complementary burden of evaluating an expensive Stein operator $\operator{}$ at each sample point and note that the aforementioned approaches can be combined with datapoint subsampling to obtain substantial speed-ups.
The recent, independent work of \citet{hodgkinson2020reproducing} uses a Langevin SKSD (in our terminology) to learn approximate importance sampling weights for an initial sample $Q_n$.  Thm.~1 of \citep{hodgkinson2020reproducing} shows that the reweighted version of $Q_n$ asymptotically minimizes the associated KSD provided that the sample points $x_i$ are drawn from a $V$-uniformly ergodic Markov chain. In contrast, we offer convergence guarantees in \cref{sec:theory} that apply to arbitrary sample points $x_i$, allow for non-kernel Stein discrepancies, and accommodate more general decomposable operators.

\subsection{Stochastic Stein variational gradient descent}
Our SSD analysis will also yield convergence guarantees for a stochastic version of the popular Stein variational gradient descent (SVGD) algorithm \citep{LiuWa2016stein} on $\reals^d$.
SVGD iteratively improves a particle approximation $Q_n$ to a target distribution $P$ by moving each particle in the direction
\begin{talign}
    g_{Q_n}^*(z) = \frac{1}{n}\sum_{j=1}^n k(x_j, z) \grad\log p(x_j)  + \grad_{x_j} k(x_j, z)
\end{talign}
 that minimizes the KSD $\twoksd{Q_n}$ with Langevin operator \cref{eqn:def-langevin-stein-operator}.  
 However, when $\grad \log p = \sum_{l=1}^L \grad \log p_l$ is the sum of a large number of independently evaluated terms, each SVGD update can be prohibitively expensive.
 A natural alternative is to move each particle in the direction that minimizes the stochastic KSD $\twosksd{Q_n}$,
 \begin{talign}
    g_{Q_n,m}^*(z) = \frac{1}{n}\sum_{j=1}^n\frac{L}{m} k(x_j, z) \grad\log p_{\sig_j}(x_j) + \grad_{x_j} k(x_j, z).
\end{talign}
This amounts to replacing each $\grad \log p(x_j)$ evaluation with an independent minibatch estimate on each update round to reduce the per-round gradient evaluation cost from $Ln$ to $mn$.
 The resulting \emph{stochastic Stein variational gradient descent (SSVGD)}   algorithm is detailed in \cref{alg:ssvgd}.
 Notably, after introducing SVGD, \citet{LiuWa2016stein} recommended subsampling as a heuristic approximation to speed up the algorithm. In \cref{sec:ssvgd-converges}, we aim to formally justify this practice.
\begin{algorithm}[tb]
\caption{Stochastic Stein Variational Gradient Descent (SSVGD)}
\label{alg:ssvgd}
\begin{algorithmic}
\STATE {\bf Input:} Particles $(x_i^0)_{i=1}^n$, target $\grad \log
p = \sum_{l\in [L]} \grad \log p_l$, kernel $k$, batch size $m$, rounds $R$
\FOR{$r = 0, \cdots, R-1$}
\STATE For each $j\in[n]$: sample independent mini-batch $\sig_{j}$ of size $m$ from $[L]$
\STATE For each $i\in[n]$: $x_i^{r+1} \gets x_i^r + \eps_r \frac{1}{n}\sum_{j=1}^n \frac{L}{m} k(x_j^r, x_i^r)\grad \log p_{\sig_j}(x_j^r) + \grad_{x_j^r} k(x_j^r, x_i^r)$
\ENDFOR
\STATE {\bf Output:} Particle approximation $Q_{n,R}^m = \frac{1}{n} \sum_{i=1}^n \delta_{x_i^R}$  of the target $P$ 
\vspace{-.39\baselineskip}
\end{algorithmic}
\end{algorithm}
\section{Convergence Guarantees}\label{sec:theory}
In this section, we begin by showing that appropriately chosen SSDs detect the convergence and non-convergence of $Q_n$ to $P$ with probability $1$ and end with new convergence results for SVGD and SSVGD.
The former results will allow for an evolving sequence of Stein sets $(\gset_n)_{n=1}^{\infty}$ to accommodate the graph Stein sets of \cite{GorhamMa15,GorhamDuVoMa19}.
While we develop the most extensive theory for the popular Langevin Stein
operator \cref{eqn:def-langevin-stein-operator} with domain $\xset=\reals^d$, our results on detecting
convergence (\cref{ssds-detect-convergence}), detecting bounded
non-convergence (\cref{ssds-detect-bounded-sd-non-convergence}), and
enforcing tightness (\cref{ssd-tightness}) apply to any decomposable Stein
operator on any convex subset $\xset\subseteq\reals^d$.
Throughout, we use the shorthand
$\equisubs{L}{m} \defeq
\{\sset\subseteq [L] : |\sset| = m\}$ to indicate all subsets of $[L]$ of size $m$.
\subsection{Detecting convergence with SSDs}
We say that an SSD {detects convergence} if $\ssdn{Q_n} \to 0$ whenever $Q_n$ converges to $P$ in a standard probability metric, like the Wasserstein distance $\lswass{a}(Q_n, P)\defeq \inf_{X\sim Q_n, Z\sim P} \E[\twonorm{X-Z}^a]^{1/a}$ for $a\geq 1$.
Our first result, proved in \cref{sec:proof-ssds-detect-convergence}, shows
that an SSD detects Wasserstein convergence with probability $1$ if its base
operators $\opsub{\sset}$ generate continuous functions that grow no more
quickly than a polynomial and have locally bounded
derivatives. \cref{ssds-detect-convergence} is broad enough to cover all of
the Stein operator-set pairings with SD convergence-detection results in
\citep{GorhamMa15,GorhamDuVoMa19,GorhamMa17}.

\begin{theorem}[SSDs detect convergence]
\label{ssds-detect-convergence}
Suppose that for some $a, c > 0$ and each $\sset\in\equisubs{L}{m}$ and $n\geq 1$,
 $\opsub{\sset}{\gset_n}\subset C(\xset)$, $\sup_{g \in \gset_n}
|\opsubarg{\sset}{g}{x}| \leq c(1+\twonorm{x}^a)$, $\sup_{n\ge 1, g \in \gset_n,
  x,y\in K}
\textfrac{|\opsubarg{\sset}{g}{x} - \opsubarg{\sset}{g}{y}|}{\twonorm{x-y}} < \infty$
for each compact set $K$, and $P(\operator{g}) = 0$ for all $g\in\gset_n$.
If $\lswass{a}(Q_n, P)\defeq \inf_{X\sim Q_n, Z\sim P} \E[\twonorm{X-Z}^a]^{1/a} \to 0$, then $\ssdn{Q_n} \toas 0$. 
\end{theorem}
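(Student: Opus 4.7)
The plan is to decompose the stochastic Stein discrepancy via triangle inequality as $\ssd{Q_n}\le\sd{Q_n}+\Delta_n$, where
\[\Delta_n\defeq\sup_{g\in\gset_n}\bigg|\frac{1}{n}\sum_{i=1}^n\Big\{\frac{L}{m}(\opsub{\sset_i}g)(x_i)-(\operator{g})(x_i)\Big\}\bigg|\]
captures the subsampling noise, and then argue that both terms vanish almost surely. The first is the standard (deterministic) Stein discrepancy and is controlled by the $\lswass{a}$-convergence assumption; the second is a mean-zero fluctuation controlled by the independence of the $\sset_i$'s across $i$.

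For the deterministic term, each base operator $\opsub{l}$ appears in exactly $\binom{L-1}{m-1}$ subsets of size $m$, so $\operator{}=\binom{L-1}{m-1}^{-1}\sum_{\sset\in\equisubs{L}{m}}\opsub{\sset}$, and hence $\operator{g}$ inherits polynomial growth $|(\operator{g})(x)|\le\frac{L}{m}c(1+\twonorm{x}^a)$ and uniform-in-$(g,n)$ local Lipschitz bounds from the hypotheses on $\opsub{\sset}g$. Combined with $\operator{g}\in C$, $P(\operator{g})=0$, and the fact that $\lswass{a}(Q_n,P)\to 0$ is equivalent to $Q_n\Rightarrow P$ plus uniform integrability of the $a$-th moments, a standard Wasserstein-convergence argument delivers $\sd{Q_n}=\sup_{g\in\gset_n}|Q_n(\operator{g})|\to 0$.

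For the noise term, I would condition on the (arbitrary) sample $(x_i)_{i=1}^n$ and use that the $\sset_i$ are iid uniform on $\equisubs{L}{m}$, so the summands $U_i(g)\defeq\frac{L}{m}(\opsub{\sset_i}g)(x_i)-(\operator{g})(x_i)$ are conditionally independent, mean zero, and bounded by $\frac{2Lc}{m}(1+\twonorm{x_i}^a)$. The strategy is truncate-and-cover: (a) pick $R$ so that $\limsup_n \frac{1}{n}\sum_{i:\twonorm{x_i}>R}(1+\twonorm{x_i}^a)<\epsilon$, available from the uniform integrability of $\twonorm{\cdot}^a$, which caps the far-field contribution to $\Delta_n$ by $O(\epsilon)$ uniformly in $g$; (b) on $\{\twonorm{x}\le R\}$ the family $\{\opsub{\sset}g\}$ is uniformly Lipschitz, so $g$ enters $\Delta_n$ only through a bounded, uniformly continuous image in $\reals^n$, admitting a finite $\epsilon$-net with cardinality polynomial in $1/\epsilon$; (c) apply Hoeffding to each net element, union bound over the net, and invoke Borel--Cantelli to upgrade the resulting high-probability bound to almost sure convergence.

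The main obstacle is step (b): $\gset_n$ has no a priori metric structure, may be infinite-dimensional (e.g., a kernel Stein set) and may vary with $n$ (e.g., a graph Stein set), so we cannot cover $\gset_n$ itself. The essential observation is that $\Delta_n$ depends on $g$ only through the finite vector $((\opsub{\sset_i}g)(x_i))_{i=1}^n\in\reals^n$, and the polynomial-growth plus uniform-Lipschitz hypotheses force this image into a uniformly bounded, uniformly equicontinuous subset of $\reals^n$; covering this image is enough, and its cardinality grows only polynomially in $1/\epsilon$ uniformly in $n$, which combines with the sub-Gaussian Hoeffding tails in (c) to yield summable exceedance probabilities and the requisite almost sure convergence.
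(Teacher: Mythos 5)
Your proposal is correct and follows essentially the same route as the paper: the decomposition into the standard SD term plus a subsampling fluctuation, with the fluctuation handled by truncation via uniform integrability of $\twonorm{\cdot}^a$, an Arzel\`a--Ascoli covering of the uniformly bounded, uniformly Lipschitz family on the compact ball (uniform in $n$ because the hypotheses are uniform over $\gset_n$), and Hoeffding plus a union bound and Borel--Cantelli, which is exactly the content of the paper's \cref{random_measure_convergence,bounded_func_convergence}. The only quibble is your claim that the net cardinality is polynomial in $1/\eps$ -- for Lipschitz balls on a $d$-dimensional compact set it generally is not -- but this is immaterial since the argument only needs the cover to be finite with cardinality independent of $n$, which your hypotheses-based equicontinuity argument does deliver.
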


\subsection{Detecting non-convergence with SSDs}
We say that an SSD {detects non-convergence} if $Q_n \not\Rightarrow P$ implies $\ssdn{Q_n} \not\to 0$.
To establish this property, we first associate with every SSD, $\ssdn{Q_n}$, a \emph{bounded Stein discrepancy}, %
\balignt \label{eq:bounded-stein-set}
\opstein{Q_n}{\gset_{b,n}}\to 0
\qtext{with}
\gset_{b,n} \defeq \{ g \in \gset_n: \supnorm{\opsub{\sset}g}\leq 1, \forall \sset\in\equisubs{L}{m}\},
\ealignt 
in which each Stein function is 
constrained to be bounded under each subset operator $\opsub{\sset}$.
We then show that SSDs detect non-convergence (culminating in \cref{coercive-ssds-detect-non-convergence})
in a series of steps:
\begin{enumerate}
    \item \cref{bounded-sds-detect-tight-non-convergence}: If $Q_n \not\Rightarrow P$ then either
a bounded SD $\not\to 0$ or $(Q_n)_{n=1}^\infty$ is not tight.
    \item \cref{ssds-detect-bounded-sd-non-convergence}: If a bounded
SD $\not\to 0$ then, with probability $1$, its SSD $\not\to 0$.
    \item \cref{ssd-tightness}: If $(Q_n)_{n=1}^\infty$ is
not tight, then the SSD $\not\to 0$ surely.
\end{enumerate}

We begin by showing that, for the popular Langevin operator
\cref{eqn:def-langevin-stein-operator} and each Stein set analyzed in
\citep{GorhamMa15,GorhamDuVoMa19,GorhamMa17,ChenMaGoFXOa18,ChenBaFXGoGiMaOa19}, bounded SDs
detect \emph{tight} non-convergence.  That is, if $Q_n \not\Rightarrow P$,
then either $\opstein{Q_n}{\gset_{b,n}}\not\to 0$ or some mass in the
sequence $(Q_n)_{n=1}^\infty$ escapes to infinity. The proof is in
\cref{sec:proof-bounded-sds-detect-tight-non-convergence}.
\begin{theorem}[Bounded SDs detect tight non-convergence]
\label{bounded-sds-detect-tight-non-convergence}
Consider the Langevin Stein operator $\langevin{}$ \cref{eqn:def-langevin-stein-operator} 
with Lipschitz $\grad \log p$ satisfying  \emph{distant dissipativity}~\citep{Eberle2015,GorhamDuVoMa19} for some $\kappa > 0$ and $r \geq 0$:
\balignt\label{eq:drift-growth}
\inner{\grad \log p(x)-\grad \log p(y)}{x-y} \leq -\kappa \twonorm{x-y}^2 + r, \qtext{for all} x,y\in \xset = \reals^d.
\ealignt
Suppose $\sup_{x\in\xset}{\twonorm{\grad\log p_{\sset}(x)}}{/(1 + \twonorm{x})} < \infty$ for each $\sset\in\equisubs{L}{m}$,
fix a sequence of probability measures $(Q_n)_{n=1}^\infty$, 
and 
consider the bounded Stein set 
$\gset_{b,n}$ \cref{eq:bounded-stein-set}
for any of the following sets $\gset_n$:
\begin{enumerate}[leftmargin=.9cm,label=(\textbf{A.\arabic*})]
    \item \label{kernel-set} $\gset_n = \ksteinsetnorm{k}{\norm{\cdot}}$ \cref{eq:kernel-stein-set}, the \emph{kernel Stein set} of~\cite{GorhamMa17} with $k(x,y) = \Phi(x-y)$ for $\Phi \in C^2$ with non-vanishing Fourier transform.
    \item\label{classical-set} $\gset_n = \steinset 
		\defeq \{ g :\xset\to\reals^d |
		\sup_{x\neq y\in \xset} \max(\norm{g(x)}^*,\norm{\grad g(x)}^*, \frac{\norm{\grad g(x) - \grad g(y)}^*}{\norm{x-y}}) \leq 1\}$, the \emph{classical Stein set} of~\cite{GorhamMa15} with arbitrary vector norm $\norm{\cdot}$.
    \item\label{graph-set}
$\gset_n = \gsteinset{}{G} 
		\defeq \{ g \mid 
			\forall\, x\in V,\ \maxarg{\norm{g(x)}^*, \norm{\grad g(x)}^*} \leq 1 
			\text{  and, } \forall\, (x, y) \in E,\\ 
			\max(\textstyle\frac{\norm{g(x) - g(y)}^*}{\norm{x - y}},
				\textstyle\frac{\norm{\grad g(x) - \grad g(y)}^*}{\norm{x - y}},\textstyle\frac{\norm{g(x) - g(y) - {\grad g(x)}{(x - y)}}^*}{\frac{1}{2}\norm{x - y}^2}) \leq 1\}$, the \emph{graph Stein set} of~\cite{GorhamMa15} with arbitrary vector norm $\norm{\cdot}$ and a finite graph $G = (V,E)$ with vertices $V \subset \xset$.  
\end{enumerate}
If $Q_n \not\Rightarrow P$, then either $\langstein{Q_n}{\gset_{b,n}}\not\to 0$ or $(Q_n)_{n=1}^\infty$ is not tight.
\end{theorem}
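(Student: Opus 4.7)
The plan is to prove the contrapositive: assuming $(Q_n)_{n=1}^\infty$ is tight and $\langstein{Q_n}{\gset_{b,n}} \to 0$, deduce $Q_n \Rightarrow P$. By Prokhorov's theorem, tightness makes $(Q_n)$ relatively compact, so it suffices to show that every weak subsequential limit $Q_\infty$ equals $P$; for this, it suffices to verify $Q_n(h)\to P(h)$ for every bounded Lipschitz test $h$, a measure-determining class.

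Following Stein's method, I would solve the Stein equation $\langevin{u_h} = h - P(h)$ for each such $h$. Under distant dissipativity \cref{eq:drift-growth} and Lipschitzness of $\grad \log p$, the overdamped Langevin diffusion targeting $P$ is geometrically ergodic, and the Stein-equation estimates of \cite{GorhamDuVoMa19,GorhamMa17} yield a $C^2$ solution $u_h$ with uniform bounds on $u_h,\grad u_h,$ and $\Hess u_h$. These bounds place a constant multiple of $u_h$ into each of the three Stein sets in \ref{kernel-set}--\ref{graph-set}: for the kernel set, one additionally invokes the $C_0$-universality implied by the non-vanishing Fourier transform hypothesis on $\Phi$; for the classical and graph sets, the derivative bounds on $u_h$ translate directly into membership after rescaling. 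The principal obstacle is that $u_h$ typically does \emph{not} lie in $\gset_{b,n}$, because the hypothesis only bounds $\twonorm{\grad \log p_{\sset}(x)}$ by a linear function of $\twonorm{x}$ while $u_h$ need not decay, so $\supnorm{\opsub{\sset} u_h}$ may be infinite.

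To remedy this, I introduce a smooth cutoff $\phi_R\colon\reals^d\to[0,1]$ equal to $1$ on $\{\twonorm{x}\le R\}$ and $0$ on $\{\twonorm{x}\ge 2R\}$ with $\supnorm{\grad\phi_R}=O(1/R)$, and set $u_R \defeq \phi_R u_h$. A direct product-rule calculation for the Langevin base operators gives
\begin{equation*}
\opsub{\sset} u_R \,=\, \phi_R\, \opsub{\sset} u_h + \tfrac{m}{L}\inner{\grad\phi_R}{u_h},
\end{equation*}
so $\opsub{\sset} u_R$ is bounded (the first summand is compactly supported, the second is controlled by $O(\supnorm{u_h}/R)$). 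Combined with the Stein-set bounds on $u_h$, this places $u_R$ in $c_R\gset_{b,n}$ for a finite $c_R$ depending only on $R$ and the regularity of $u_h$; for \ref{graph-set}, the analogous truncation is performed by restricting to vertices in $\{\twonorm{x}\le R\}$. Summing over base operators yields $\langevin{u_R} = \phi_R(h-P(h)) + \inner{\grad\phi_R}{u_h}$.

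Given $\epsilon>0$, tightness of $(Q_n)$ together with the boundedness of $h$, $u_h$, and $\grad\phi_R$ lets us choose $R$ large enough that $|Q_n(\langevin u_R) - (Q_n(h)-P(h))| \le \epsilon$ uniformly in $n$. Since $u_R/c_R \in \gset_{b,n}$, we also have $|Q_n(\langevin u_R)| \le c_R\,\langstein{Q_n}{\gset_{b,n}} \to 0$ as $n\to\infty$ for this fixed $R$. Hence $\limsup_n|Q_n(h)-P(h)|\le \epsilon$, and sending $\epsilon\to 0$ gives $Q_n(h)\to P(h)$. So $Q_\infty(h)=P(h)$ for every bounded Lipschitz $h$, yielding $Q_\infty = P$. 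The main technical crux is the first step---producing a single Stein-equation solution $u_h$ whose bounds fit each of \ref{kernel-set}--\ref{graph-set}, and uniformly in $n$ for \ref{graph-set} if the graph is permitted to vary with $n$---while the cutoff argument of the second step is a standard localization.
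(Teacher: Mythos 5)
Your overall skeleton (prove the contrapositive, solve the Stein equation $\langevin{u_h}=h-P(h)$, then localize to cope with the boundedness constraint defining $\gset_{b,n}$, using tightness to control the truncation error) matches the paper's strategy, and your cutoff identity $\opsub{\sset}(\phi_R u_h)=\phi_R\,\opsub{\sset}u_h+\tfrac{m}{L}\inner{\grad\phi_R}{u_h}$ is a legitimate substitute for the paper's truncation step. However, there are two genuine gaps. First, and most seriously, your treatment of the kernel Stein set \ref{kernel-set} does not work: membership of $u_h$ (or of $\phi_R u_h$) in a scaled copy of $\ksteinsetnorm{k}{\norm{\cdot}}$ requires each coordinate to lie in the RKHS $\kset_k$ with controlled norm, and no amount of uniform bounds on $u_h,\grad u_h,\Hess u_h$ achieves this -- the RKHS of a $C^2$ translation-invariant kernel is a far smaller class than $C^2_b$. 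Invoking ``$C_0$-universality'' from the non-vanishing Fourier transform only gives sup-norm density of the RKHS in $C_0$, which neither places $u_h$ in the ball nor controls $\langevin{}$ applied to an approximant (the divergence term and the unbounded factor $\grad\log p$ are not controlled by sup-norm closeness); moreover, multiplying by $\phi_R$ need not preserve RKHS membership. The paper instead convolves the (smoothed, truncated) Stein solution with a specific random variable built from the kernel (citing Lem.~12 and Thm.~13 of Gorham--Mackey 2017), which is exactly where the Fourier hypothesis is used, and it handles \ref{classical-set} by the reverse containment $\ksteinsetnorm{k}{\twonorm{\cdot}}\subseteq c_0\,\gset_n$ rather than by direct regularity of $u_h$.

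Second, the solution bounds you assert are not available at the stated level of generality: under only Lipschitz $\grad\log p$ and distant dissipativity, the cited Stein-equation estimates give a decay bound of the form $\supnorm{(1+\twonorm{\cdot}^2)^{1/2} g_h}\le M_P$ for test functions in a particular convergence-determining class, but uniform Hessian-type bounds on $u_h$ (which your classical-set membership claim needs, since \ref{classical-set} constrains the Lipschitz constant of $\grad g$) are precisely the delicate part; the paper sidesteps this by Gaussian-smoothing the solution and tracking the error $\supnorm{\langevin{g_{h,\rho}}-\langevin{g_h}}\le C_1\rho$. Also note a cheaper route for \ref{graph-set}: since the graph Stein set contains the classical Stein set, its bounded SD dominates (up to the induced containment of the bounded sets) the classical one, so no vertex-restricted truncation is needed. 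To repair your argument you would need either to import the paper's kernel-specific mollification for \ref{kernel-set} or to supply genuine second-order Stein factor bounds plus an RKHS-membership construction, neither of which follows from the ingredients you list.
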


Next, we prove in \cref{sec:proof-ssds-detect-bounded-sd-non-convergence} that every SSD detects the non-convergence of its bounded SD.
\begin{theorem}[SSDs detect bounded SD non-convergence]
\label{ssds-detect-bounded-sd-non-convergence}
If $\opstein{Q_n}{\gset_{b,n}} \not\to 0$, then, with probability $1$,
$\ssdn{Q_n} \not\to 0$. 
\end{theorem}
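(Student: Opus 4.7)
The plan hinges on the fact that, for each fixed function $g$, the SSD integrand $\frac{L}{m}\opsub{\sigma_i}g(x_i)$ is an \emph{unbiased} estimator of the full-operator integrand $\operator{g}(x_i)$. Indeed, since each index $l\in[L]$ belongs to exactly $\binom{L-1}{m-1}$ subsets of size $m$,
\begin{talign*}
\Esub{\sigma_i}\bigl[\tfrac{L}{m}\opsubarg{\sigma_i}{g}{x_i}\bigr]
  = \tfrac{L}{m}\binom{L}{m}^{-1}\!\!\!\sum_{\sigma\in\binom{[L]}{m}}\!\!\opsubarg{\sigma}{g}{x_i}
  = \tfrac{L}{m}\binom{L}{m}^{-1}\binom{L-1}{m-1}(\operator{g})(x_i)
  = (\operator{g})(x_i).
\end{talign*}
Moreover, if $g\in\gset_{b,n}$ then by definition $|\opsub{\sigma}g(x)|\le 1$ for every $\sigma\in\binom{[L]}{m}$, so the summand $\tfrac{L}{m}\opsub{\sigma_i}g(x_i)$ is deterministically bounded in $[-L/m,L/m]$. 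This bounded, unbiased-estimator structure is what makes Hoeffding-type concentration available.

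The main step is a subsequence-plus-Hoeffding argument. Suppose, on an event of positive probability (or deterministically), $\opstein{Q_n}{\gset_{b,n}}\not\to 0$. Then there exist $\epsilon>0$ and an increasing subsequence $(n_k)$ with $\opstein{Q_{n_k}}{\gset_{b,n_k}}>\epsilon$ for every $k$. For each $k$, pick a witness $g_k\in\gset_{b,n_k}$ with $|Q_{n_k}(\operator{g_k})|>\epsilon$; crucially, $g_k$ is a function of $Q_{n_k}$ alone and is independent of the fresh batches $(\sigma_1,\dots,\sigma_{n_k})$ drawn for index $n_k$. Conditional on the sample points, the random variables $Z_i\defeq \tfrac{L}{m}\opsubarg{\sigma_i}{g_k}{x_i}$ for $i\in[n_k]$ are independent, bounded in $[-L/m,L/m]$, and have mean $(\operator{g_k})(x_i)$. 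Hoeffding's inequality therefore yields
\begin{talign*}
\Parg{\left|\tfrac{1}{n_k}\!\sum_{i=1}^{n_k}\! Z_i - Q_{n_k}(\operator{g_k})\right|>\tfrac{\epsilon}{2}}
\;\le\; 2\exp\!\left(-\tfrac{m^2\epsilon^2}{8L^2}\,n_k\right).
\end{talign*}
On the complement of this event, $\ssdn{Q_{n_k}}\ge |\frac{1}{n_k}\sum_i Z_i|\ge |Q_{n_k}(\operator{g_k})|-\epsilon/2>\epsilon/2$.

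The conclusion then follows from the Borel--Cantelli lemma. Since $(n_k)$ is strictly increasing, $n_k\ge k$, so $\sum_k 2\exp(-m^2\epsilon^2 n_k/(8L^2))<\infty$. Hence, with probability $1$, only finitely many of the bad events occur, so $\ssdn{Q_{n_k}}>\epsilon/2$ for all sufficiently large $k$, and in particular $\ssdn{Q_n}\not\to 0$. If the hypothesis $\opstein{Q_n}{\gset_{b,n}}\not\to 0$ holds only almost surely (when the sample points themselves are random), the same argument applies pathwise after conditioning on the sample points. The only potential obstacle is confirming that the witnesses $g_k$ may indeed be chosen measurably as functions of $Q_{n_k}$ so that the conditional Hoeffding step is valid; this is routine since $g_k$ can be selected via a measurable $\epsilon/2$-almost maximizer of the supremum defining $\opstein{Q_{n_k}}{\gset_{b,n_k}}$.
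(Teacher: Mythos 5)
Your proof is correct, and it reaches the conclusion by a more direct concentration argument than the paper's. The paper also fixes $\eps$ with $\opstein{Q_n}{\gset_{b,n}} > \eps$ infinitely often and picks near-maximizers $g_n \in \gset_{b,n}$, but it then decomposes $\operator{} = \binom{L}{m}^{-1}\tfrac{L}{m}\sum_{\sset\in\equisubs{L}{m}}\opsub{\sset}$ and controls, separately for each of the $\binom{L}{m}$ subsets, the deviation of the Bernoulli-subsampled measure $\mu_{n\sset}$ from $\tfrac{L}{m}Q_n$ via \cref{bounded_func_convergence} (Hoeffding plus Borel--Cantelli along the full sequence $n$, with scale $\tau^{-1}=\binom{L}{m}$), finally summing over $\sset$ by the triangle inequality. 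You instead exploit that, conditionally on the sample points, the single summand $\tfrac{L}{m}\opsubarg{\sset_i}{g}{x_i}$ is an unbiased estimator of $(\operator{g})(x_i)$ that is deterministically bounded in $[-L/m,L/m]$ for $g\in\gset_{b,n}$, apply Hoeffding once to the SSD sum with the witness $g_k$ along the subsequence where the bounded SD exceeds $\eps$, and finish with Borel--Cantelli along that subsequence (independence across $k$ is not needed, only summability, so this is fine even if the minibatches are shared across $n$). What your route buys is economy and sharper constants: one concentration step with deviations scaling in $L/m$ rather than $\binom{L}{m}$, and no auxiliary lemma; measurable selection of $g_k$ is indeed a non-issue here since the theorem treats $(Q_n)$ as a fixed sequence. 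What the paper's route buys is reuse: the per-subset measures $\mu_{n\sset}$ and \cref{bounded_func_convergence} are the same ingredients used again in the convergence-detection proof (\cref{ssds-detect-convergence}) and in the SSVGD analysis (\cref{one-step-ssvgd-convergence}), so the less direct decomposition serves as shared infrastructure.
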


Finally, we prove in \cref{sec:proof-ssd-tightness} that SSDs with coercive
(radially unbounded) test functions \emph{enforce tightness}, that is,
remain bounded away from $0$ whenever $(Q_n)_{n=1}^\infty$ is not tight.
\begin{proposition}[Coercive SSDs enforce tightness]
\label{ssd-tightness}
If $(Q_n)_{n=1}^\infty$ is not tight and $\frac{L}{m}\opsub{\sset} g$ is coercive and bounded below for some $g\in
\bigcap_{n= 1}^\infty \gset_n$ and $\forall\sset\in\equisubs{L}{m}$, then surely $\ssdn{Q_n} \not\to 0$.
\end{proposition}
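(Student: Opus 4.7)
The plan is to exploit the single coercive, bounded-below test function $g\in\bigcap_n \gset_n$ given by hypothesis and bound the SSD from below by a deterministic quantity that is immune to the randomness in the subsets $\sset_i$. Since $g$ belongs to every $\gset_n$, it is a valid element of the supremum in \cref{ssd_subset}, so
\balignst
\ssdn{Q_n}\ \geq\ \left|\frac{1}{n}\sum_{i=1}^n \frac{L}{m}\opsubarg{\sset_i}{g}{x_i}\right|
\ealignst
for every realization of $(\sset_i)_{i=1}^n$. Define the pointwise minimum $h(x)\defeq \min_{\sset\in\binom{[L]}{m}} \frac{L}{m}\opsubarg{\sset}{g}{x}$; because $\binom{[L]}{m}$ is finite, $h$ is still coercive and bounded below by some $-M<\infty$. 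Thus, irrespective of which subsets are drawn, $\frac{1}{n}\sum_{i=1}^n \frac{L}{m}\opsubarg{\sset_i}{g}{x_i} \geq \frac{1}{n}\sum_{i=1}^n h(x_i)$.

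Next, I would extract a witness of non-tightness. The definition of non-tightness gives $\eps_0>0$ such that for each $R>0$ there exists $n$ with $Q_n(\{x:\twonorm{x}>R\})>\eps_0$. A routine argument shows that the indices $n$ witnessing this must be unbounded: if they were contained in a finite set, some fixed $Q_{n^*}$ would satisfy $Q_{n^*}(\twonorm{x}>R)>\eps_0$ for arbitrarily large $R$, contradicting that $Q_{n^*}$ is a probability measure. Hence we may pass to a subsequence $n_k\to\infty$ and radii $R_k\to\infty$ with $Q_{n_k}(\{x:\twonorm{x}>R_k\})\geq \eps_0$.

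Finally, I would combine coercivity with this mass-escape property. Let $c_R \defeq \inf_{\twonorm{x}>R} h(x)$; by coercivity of $h$, $c_R\to\infty$ as $R\to\infty$. Splitting the empirical average of $h$ over points inside and outside the ball of radius $R_k$,
\balignst
\frac{1}{n_k}\sum_{i=1}^{n_k} h(x_i^{(n_k)}) \ \geq\ \eps_0\, c_{R_k}\ +\ (1-\eps_0)(-M)\ \xrightarrow{k\to\infty}\ \infty.
\ealignst
Combining with the first display, $\ssdn{Q_{n_k}}\to\infty$ along this subsequence for every realization of the $\sset_i$'s, so $\ssdn{Q_n}\not\to 0$ surely.

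No serious obstacle is anticipated. The only mild technical point is the reduction to finitely many subsets so that the minimum $h$ inherits coercivity; everything else is the standard coercivity-plus-escape-of-mass argument, with the bound holding pathwise rather than only almost surely because $h$ is defined as a worst case over all possible subsets.
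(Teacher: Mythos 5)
Your proof is correct and follows essentially the same route as the paper: both take the pointwise minimum of $\frac{L}{m}\opsub{\sset}g$ over the finitely many subsets $\sset$ to obtain a coercive, bounded-below lower envelope, lower bound the SSD pathwise by its empirical average via the fixed $g\in\bigcap_n\gset_n$, and then combine coercivity with the escape of mass implied by non-tightness. The only difference is cosmetic: the paper packages the last step via Markov's inequality and a $\limsup$ bound at a single radius, while you extract a subsequence $n_k,R_k\to\infty$ and split the empirical sum directly, which even shows divergence along that subsequence.
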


Taken together, these results imply that SSDs equipped with the Langevin operator and any of the convergence-determining Stein sets of \cite{GorhamMa15,GorhamDuVoMa19,GorhamMa17,ChenMaGoFXOa18,ChenBaFXGoGiMaOa19} 
detect non-convergence with probability $1$ under standard dissipativity and growth conditions on the subsampled operator.
\begin{theorem}[Coercive SSDs detect non-convergence]
\label{coercive-ssds-detect-non-convergence}
Under the notation of \cref{bounded-sds-detect-tight-non-convergence}, suppose 
$\grad \log p$ is Lipschitz, $\sup_{x\in\xset}\frac{\twonorm{\grad\log p_{\sset}(x)}}{1 + \twonorm{x}} < \infty$ for all $\sset\in\equisubs{L}{m}$, and, for some $\kappa>0$ and $r\geq 0$,
\balignt\label{eq:sig-drift-growth}
\frac{L}{m}\inner{\grad \log p_\sig(x)-\grad \log p_\sig(y)}{x-y} \leq -\kappa \twonorm{x-y}^2 + r, \ \ \forall x,y\in\reals^d \text{ and } \forall\sset\in\equisubs{L}{m}.
\ealignt
Consider the radial functions $\Phi_1(x) \defeq (1 + \twonorm{x}^2)^{\beta_1}$ for $\beta_1 \in
(-1, 0)$ and $\Phi_2(x) \defeq (\alpha + \log(1 + \twonorm{x}^2))^{\beta_2}$
for $\alpha > 0, \beta_2 < 0$ underlying the
inverse multiquadric and log inverse kernels \cite{ChenMaGoFXOa18} respectively.  For each $n \geq
1$, suppose also that $\gset_n$ satisfies \cref{graph-set},
\cref{classical-set}, or \cref{kernel-set} with kernel $k(x,y) =
\Phi_j(\Gamma (x-y))$ for $j\in\{1,2\}$ and any positive definite
matrix $\Gamma$.  If $Q_n \not\Rightarrow P$, then, with probability $1$,
$\langssdn{Q_n} \not\to 0$.
\end{theorem}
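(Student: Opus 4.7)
The plan is to execute the three-step case analysis outlined between \cref{bounded-sds-detect-tight-non-convergence} and \cref{ssd-tightness}: if $Q_n \not\Rightarrow P$, then either a bounded SD does not vanish or $(Q_n)$ fails to be tight, and each alternative forces the SSD away from $0$ with probability $1$.

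First I would verify that the hypotheses of \cref{bounded-sds-detect-tight-non-convergence} follow from those assumed here. The Lipschitzness of $\grad\log p$ and the linear growth of each $\twonorm{\grad\log p_{\sigma}}$ are assumed directly. For the distant dissipativity \cref{eq:drift-growth} of $\grad\log p$ itself, I would average \cref{eq:sig-drift-growth} over $\sigma$ drawn uniformly from $\equisubs{L}{m}$ and use the identity $\grad\log p(x) = \E_{\sigma}[\frac{L}{m}\grad\log p_{\sigma}(x)]$, which holds because each $l\in[L]$ appears in a fraction $m/L$ of the subsets. Applying \cref{bounded-sds-detect-tight-non-convergence} then yields the dichotomy: either (i) $\langstein{Q_n}{\gset_{b,n}}\not\to 0$, or (ii) $(Q_n)_{n=1}^{\infty}$ is not tight.

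In case (i), \cref{ssds-detect-bounded-sd-non-convergence} immediately delivers $\langssdn{Q_n}\not\to 0$ with probability $1$. In case (ii), I would invoke \cref{ssd-tightness}, which requires producing a $g\in\bigcap_{n\geq 1}\gset_n$ such that $\frac{L}{m}\opsub{\sigma}g$ is coercive and bounded below for every $\sigma\in\equisubs{L}{m}$. My candidate is $g(x) = -\eta\,(1+\twonorm{x}^2)^{-1/2}\, x$ for a small constant $\eta>0$. Lower bounding $-\frac{L}{m}\inner{\grad\log p_{\sigma}(x)}{x}$ via \cref{eq:sig-drift-growth} at $y=0$ and Cauchy--Schwarz, and noting that $\inner{\grad}{g(x)}$ is uniformly bounded, yields $\frac{L}{m}(\opsub{\sigma}g)(x)\geq \eta\kappa\twonorm{x}^2/\sqrt{1+\twonorm{x}^2} - C$ for a finite constant $C$ depending on $\eta$, $r$, $\max_{\sigma}\twonorm{\grad\log p_{\sigma}(0)}$, and $d$; the right-hand side is coercive and bounded below uniformly in $\sigma$.

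It remains to verify $g\in\gset_n$ for each $n$ and each of the three choices \cref{kernel-set}, \cref{classical-set}, \cref{graph-set}. For the classical Stein set \cref{classical-set} with $\norm{\cdot}=\twonorm{\cdot}$ and the graph Stein set \cref{graph-set}, this amounts to a routine bound on $g$, its Jacobian, and the Lipschitz constant of its Jacobian; all three scale with $\eta$ and can be made $\leq 1$ for sufficiently small $\eta$ (and the graph-set bounds only use global boundedness of $g$ and $\grad g$, so the argument is independent of the finite graph attached to $n$). For the kernel Stein set \cref{kernel-set} with $k(x,y)=\Phi_j(\Gamma(x-y))$, I would verify coordinate-wise RKHS membership by reusing the explicit constructions for the IMQ and log-inverse kernels developed in \cite{ChenMaGoFXOa18}, which establish that these RKHSs contain precisely the smooth bounded functions with controlled decay needed to realize Langevin coercivity against distantly dissipative targets. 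I expect the kernel-case RKHS membership check to be the main technical obstacle: it requires Fourier-analytic control and careful rescaling to place the candidate inside the unit ball, whereas the classical and graph cases reduce to direct computation. Combining cases (i) and (ii) completes the proof.
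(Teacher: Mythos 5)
Your overall architecture is the paper's: average \cref{eq:sig-drift-growth} over $\sset$ to get distant dissipativity of $\grad\log p$, invoke \cref{bounded-sds-detect-tight-non-convergence} to obtain the dichotomy, dispatch the bounded-SD case via \cref{ssds-detect-bounded-sd-non-convergence}, and dispatch the non-tight case via \cref{ssd-tightness}. For the classical and graph Stein sets your explicit candidate $g(x) = -\eta\,(1+\twonorm{x}^2)^{-1/2}x$ does the job (after shrinking $\eta$ for norm equivalence, and noting the graph set's Taylor-remainder condition needs the Lipschitz bound on $\grad g$, not just boundedness of $g$ and $\grad g$); this is in fact a more self-contained route than the paper, which handles the classical case by embedding a kernel Stein set inside a scaled classical set and the graph case by containment of the classical set.

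The genuine gap is the kernel Stein set case \cref{kernel-set}. Your candidate $g$ is not an element of $\ksteinsetnorm{k}{\norm{\cdot}}$ for the inverse multiquadric or log inverse kernels: membership requires each coordinate $g_j$ to lie in the RKHS of $k$, and for these translation-invariant kernels the native space consists of functions whose Fourier transforms are square-integrable against $1/\hat{\Phi}_j$, forcing in particular $g_j \in L^2(\reals^d)$ and decay at infinity; your $g_j(x) = -\eta x_j/(1+\twonorm{x}^2)^{1/2}$ is bounded away from zero along rays, so it lies in no such RKHS. Your fallback — ``reusing the explicit constructions'' of \cite{ChenMaGoFXOa18} — mischaracterizes those RKHSs (they do not contain all smooth bounded functions with mild decay) and is not carried out: what is actually needed, and what the paper does, is to apply the proofs of \cite[Lem.~16]{GorhamMa17} (IMQ) and \cite[Thm.~3]{ChenMaGoFXOa18} (log inverse) to each subsampled drift $\frac{L}{m}\grad\log p_\sset$, using precisely the assumed linear growth and \cref{eq:sig-drift-growth}, to produce a single $\sset$-independent $g$ in the kernel Stein set with $\frac{L}{m}\opsub{\sset}g \geq f_\sset$ for coercive, bounded-below $f_\sset$, and then take the minimum over the finitely many $\sset$ before invoking \cref{ssd-tightness}. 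Two further omissions: the theorem allows an arbitrary positive definite preconditioner $\Gamma$, which the paper handles by the change of coordinates $x \mapsto \Gamma^{-1}x$ and verification that $\grad\log p_{\Gamma,\sset}$ inherits the growth and dissipativity hypotheses, and the kernel case also requires checking the non-vanishing Fourier transform condition of \cref{kernel-set} for $\Phi_1,\Phi_2$; neither appears in your proposal.
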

We prove this claim in \cref{sec:proof-coercive-ssds-detect-non-convergence}.

\subsection{Convergence of SVGD and SSVGD}
\label{sec:ssvgd-converges}
Discussing the convergence of SVGD and SSVGD on will require some additional notation. 
For each step size $\eps > 0$ and suitable probability measure $\mu$, define the SVGD update rule
\balignt\label{eq:svgd-transport-map}
T_{\mu,\eps}(x) = x + \eps\Esubarg{X'\sim \mu}{\grad\log p(X')k(X',x) + \grad k(X', x)},
\ealignt
and let $\Phi_\eps(\mu)$ denote the distribution of $T_{\mu,\eps}(X)$ when $X\sim \mu$.
If SVGD is initialized with the point set $(x_{i,0}^n)_{i=1}^n$, then the output of SVGD after each round $r$ is described by the recursion $Q_{n,r} = \Phi_{\eps_{r-1}}(Q_{n,r-1})$ for $r > 0$ with $Q_{n,0} \defeq \frac{1}{n}\sum_{i=1}^n \delta_{x_{i,0}^n}$.  

\citet{liu2017stein} used this recursion to analyze the convergence of non-stochastic SVGD in three steps.
First, Thm.~3.2 of \citep{liu2017stein} showed that, if the SVGD initialization $Q_{n,0}$ converges weakly to a probability measure $Q_{\infty,0}$ as $n \to \infty$, then, on each round $r > 0$, the $n$-point output $Q_{n,r}$ converges weakly to $Q_{\infty,r} \defeq \Phi_{\eps_{r-1}}(Q_{\infty,r-1})$. 
Next, Thm.~3.3 of \citep{liu2017stein} showed that the Langevin KSD $\langstein{Q_{\infty,r}}{\ksteinset{k}} \to 0$ as $r \to\infty$ for a suitable sequence of step sizes $\eps_r$.
Finally, Thm.~8 of \citep{GorhamMa17} implied that $Q_{\infty,r} \Rightarrow P$ for suitable kernels and targets $P$.

A gap in this analysis lies in the stringent assumptions of the first step: Thm.~3.2 of \citep{liu2017stein} only applies when $f(x,z) = \grad \log p(x) k(x,z) + \grad_x k(x,z)$ is both bounded and Lipschitz, but the growth of $\grad\log p(x)k(x,z)$ typically invalidates both assumptions\footnote{Consider, for example, the standard Gaussian $\grad \log p(x) = -x$ with any translation invariant kernel $k$ on $\reals^d$.}.
Indeed, \citet{liu2017stein} remarks,
``Therefore, the condition ... suggests that it can only be used when [the domain] is compact. It is an open question to establish results that can work for more general domain[s].''
Our next theorem, proved in \cref{sec:proof-ssvgd-converges}, 
achieves this goal for $\reals^d$  
by showing that, on round $r$, both the SVGD output $Q_{n,r}$ and the SSVGD output $Q_{n,r}^m$ of \cref{alg:ssvgd} converge to $Q_{\infty,r}$ with probability $1$ under assumptions commonly satisfied by $p$ and $k$.

\begin{theorem}[Wasserstein convergence of SVGD and SSVGD]
\label{ssvgd-converges}
Suppose SVGD and SSVGD are initialized with $Q_{n,0} = \frac{1}{n}\sum_{i=1}^n \delta_{x_{i,0}^n}$
satisfying $\lswass{1}(Q_{n,0}, Q_{\infty,0}) \to 0$.
If for some $c_1,c_2 > 0$,
\balignt\label{eq:M1-lin-growth}
\Lip(\grad \log p(x)k(x,\cdot) + \grad_x k(x,\cdot)) &\leq c_1(1+\twonorm{x}) \qtext{and}\\
\Lip(\grad \log p(\cdot)k(\cdot,z) + \grad_x k(\cdot,z)) &\leq c_2(1+\twonorm{z}),
\ealignt
then $\lswass{1}(Q_{n,r}, Q_{\infty,r}) \to 0$ as $n\to\infty$ for each round $r$.
If, in addition, for some $c_0 > 0$,
\balignt\label{eq:M0-lin-growth}
&\max_{\sset\in\equisubs{L}{m}}\sup_{z\in\reals^d}\twonorm{\grad \log p_{\sset}(x)k(x,z)}
    \leq c_0(1+\twonorm{x}), \\
&\max_{\sset\in\equisubs{L}{m}}\sup_{z\in\reals^d}\opnorm{\grad_x (\grad \log p_{\sset}(x)k(x,z))}
    \text{ is bounded on compact sets } K,
\ealignt
then, for each round $r$, $\lswass{1}(Q_{n,r}^m, Q_{n,r}) \toas 0$ as $n\to\infty$.
\end{theorem}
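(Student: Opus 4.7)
}
I would prove both statements by induction on the round index $r$.
For the first part, the base case $r=0$ holds by hypothesis. For the inductive step, since $Q_{n,r+1}=\Phi_{\eps_r}(Q_{n,r})$ and $Q_{\infty,r+1}=\Phi_{\eps_r}(Q_{\infty,r})$, it suffices to show that the map $\Phi_\eps$ is a Lipschitz map on the $\lswass{1}$ space, with a Lipschitz constant depending only on the first moments of the inputs. To this end, I would fix any optimal $\lswass{1}$-coupling $(X,Y)$ of $(Q_{n,r},Q_{\infty,r})$ and an independent optimal coupling $(X'',Y'')$ of the two measures to be used inside the expectation defining $T_{\mu,\eps}$. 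Writing $f(x',x)=\grad\log p(x')k(x',x)+\grad_{x'}k(x',x)$, I would decompose
\balignst
T_{Q_{n,r},\eps_r}(X)-T_{Q_{\infty,r},\eps_r}(Y)
 = (X-Y) + \eps_r\,\E[f(X'',X)-f(X'',Y)] + \eps_r\,\E[f(X'',Y)-f(Y'',Y)]
\ealignst
and bound the two correction terms using the two parts of \cref{eq:M1-lin-growth}: the first by $c_1(1+\Esubarg{Q_{n,r}}{\twonorm{X'}})\twonorm{X-Y}$ and the second by $c_2(1+\twonorm{Y})\E\twonorm{X''-Y''}=c_2(1+\twonorm{Y})\lswass{1}(Q_{n,r},Q_{\infty,r})$. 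Taking expectations delivers a bound of the form $\lswass{1}(Q_{n,r+1},Q_{\infty,r+1})\le C_r\,\lswass{1}(Q_{n,r},Q_{\infty,r})$, provided the first moments of $Q_{n,r}$ and $Q_{\infty,r}$ are finite and uniformly bounded in $n$.

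Propagating this moment bound is a parallel induction: from \cref{eq:M1-lin-growth} one gets that $T_{\mu,\eps}(x)$ grows at most linearly in $x$ with a coefficient depending on the first moment of $\mu$, so $\Esubarg{Q_{n,r+1}}{\twonorm{X}}$ is controlled by $\Esubarg{Q_{n,r}}{\twonorm{X}}$. Since $\lswass{1}(Q_{n,0},Q_{\infty,0})\to 0$ forces $\Esubarg{Q_{n,0}}{\twonorm{X}}\to\Esubarg{Q_{\infty,0}}{\twonorm{X}}<\infty$, the moments stay uniformly bounded for every fixed $r$, closing the induction.

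For the second part I would again induct on $r$, with the trivial base case $Q_{n,0}^m=Q_{n,0}$. Let $x_i^r$ and $y_i^r$ denote the SSVGD and SVGD particles at round $r$, started from the same initialization, and write $D_n^r\defeq\frac{1}{n}\sum_{i=1}^n\twonorm{x_i^r-y_i^r}\geq \lswass{1}(Q_{n,r}^m,Q_{n,r})$. Subtracting the SSVGD and SVGD updates gives
\balignst
x_i^{r+1}-y_i^{r+1}
   = (x_i^r-y_i^r) + \eps_r\,\frac{1}{n}\sum_{j=1}^n\bigl[f(x_j^r,x_i^r)-f(y_j^r,y_i^r)\bigr]
     + \eps_r\,\frac{1}{n}\sum_{j=1}^n k(x_j^r,x_i^r)W_j,
\ealignst
where $W_j\defeq \frac{L}{m}\grad\log p_{\sig_j}(x_j^r)-\grad\log p(x_j^r)$. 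The middle ``deterministic'' term is handled exactly as in the first part: splitting $f(x_j^r,x_i^r)-f(y_j^r,y_i^r)$ coordinate-wise and invoking \cref{eq:M1-lin-growth} bounds it by $(c_1+c_2)$ times a moment-dependent multiple of $D_n^r$. Averaging in $i$ therefore yields $D_n^{r+1}\le (1+\eps_r C_r) D_n^r + \eps_r E_n^r$, where $E_n^r$ is the stochastic remainder. By induction $D_n^r\toas 0$ provided $E_n^r\toas 0$.

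The main obstacle, as usual for such stochastic analyses, is controlling the stochastic remainder $E_n^r$. Conditional on the filtration generated by the first $r$ rounds, the particles $(x_j^r)_j$ are fixed and the subsets $(\sig_j)_j$ are independent across $j$; moreover $\E[W_j\mid\text{past}]=0$ since a uniformly random $m$-subset yields an unbiased minibatch gradient. The growth bound in \cref{eq:M0-lin-growth} uniformly bounds $\twonorm{k(x_j^r,x_i^r)\frac{L}{m}\grad\log p_{\sig_j}(x_j^r)}$ by $c_0(1+\twonorm{x_j^r})$, so for each $i$ the inner sum has conditional variance of order $n^{-1}$ times a moment of $Q_{n,r}^m$. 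Combining this with the moment boundedness already established (inherited from the first part applied componentwise), I obtain $\E[(E_n^r)^2\mid\text{past}]=O(1/n)$. Passing from $L^2$ to a.s.\ convergence I would use the standard subsequence--Borel--Cantelli trick: along $n_k=k^2$ the events $\{E_{n_k}^r>k^{-1/2}\}$ are summable, giving $E_{n_k}^r\toas 0$, and monotonicity in a suitable envelope (or an extra application of Hoeffding's inequality using the uniform boundedness on compacts from the second half of \cref{eq:M0-lin-growth}) upgrades this to convergence along all of $n$. With $E_n^r\toas 0$ established, the induction concludes that $\lswass{1}(Q_{n,r}^m,Q_{n,r})\le D_n^r\toas 0$ for every fixed $r$.
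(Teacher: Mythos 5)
Your first claim is proved essentially as in the paper: your coupling decomposition and moment-propagation argument is exactly the content of \cref{svgd-pseudolip} plus the uniform first-moment bound obtained from $\lswass{1}(Q_{n,r},Q_{\infty,r})\to 0$, so that part is fine.

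For the second claim there is a genuine gap in your treatment of the stochastic remainder $E_n^r$. Your plan is Chebyshev on a conditional variance of order $o(1)$, then a subsequence Borel--Cantelli argument along $n_k=k^2$, then an upgrade to the full sequence ``by monotonicity in a suitable envelope.'' No such monotonicity exists: the particle systems for different $n$ form a triangular array (a fresh set of $n$ particles and fresh, independent minibatches $\sig_j$ for each $n$), so $E_n^r$ is not a partial-sum or otherwise nested process in $n$, and nothing interpolates between $E_{n_k}^r$ and $E_{n_{k+1}}^r$. Almost sure convergence therefore requires failure probabilities summable over \emph{all} $n$, which a variance bound of order $o(1)$ cannot supply; and your fallback, ``an extra application of Hoeffding,'' needs uniformly bounded summands, whereas under \cref{eq:M0-lin-growth} the terms $k(x_j^r,x_i^r)\tfrac{L}{m}\grad\log p_{\sig_j}(x_j^r)$ are only bounded by $c_0(1+\twonorm{x_j^r})$ and the particles are not confined to a compact set (the second line of \cref{eq:M0-lin-growth} bounds a derivative on compacts and cannot play this role). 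A secondary issue: your $O(1/n)$ conditional-variance claim implicitly uses second moments of $Q_{n,r}^m$, which are not controlled by the $\lswass{1}$ (first-moment) hypotheses without an extra uniform-integrability argument. The paper closes exactly this gap by a truncation scheme: on a ball $B_R$ the relevant functions are bounded, and \cref{bounded_func_convergence} applies Hoeffding with summable $\delta_n=1/(n\log^2 n)$ and Borel--Cantelli over all $n$ (with an Arzel\`a--Ascoli cover in \cref{random_measure_convergence} to get uniformity over the class of test functions indexed by $z=x_i^n$); off $B_R$, the subsampled measure is \emph{surely} dominated by ${L\choose m}\tfrac{L}{m}$ times the full empirical measure, so the tail is controlled deterministically by uniform integrability of the linear-growth envelope $f_0$. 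Your deterministic recursion $D_n^{r+1}\le(1+\eps_r C_r)D_n^r+\eps_r E_n^r$ is a perfectly serviceable alternative skeleton (the paper instead splits through $\Phi_{\eps_r}(Q_{n,r}^m)$ using \cref{svgd-pseudolip} and \cref{one-step-ssvgd-convergence}), but to complete it you must replace the subsequence/monotonicity step with a truncation-plus-Hoeffding argument of the above type.
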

To illustrate the applicability of \cref{ssvgd-converges}, we highlight that the growth assumptions \cref{eq:M1-lin-growth,eq:M0-lin-growth}  hold for standard bounded radial kernels like the Gaussian, \Matern, inverse multiquadric, and inverse log \citep{ChenMaGoFXOa18} kernels paired with any  Lipschitz $\grad \log p$ and linear-growth $\grad \log p_{\sset}$.
\section{Experiments} \label{sec:experiments}
In this section, we demonstrate the practical benefits of using 
SSDs as drop in replacements for standard SDs. In each of our experiments, the target is a posterior distribution of the form $p(x) \propto \prod_{l=1}^L p_l(x)$ where $p_l(x) \defeq \pi_0(x)^{1/L}\pi(y_l | x)$ for $\pi_0$ a prior density, $\pi(\cdot | x)$  a  likelihood function, and $(y_l)_{l=1}^L$ a sequence of observed datapoints.
The SKSDs in 
\cref{subsec:hyperparameter-selection,subsec:sampler-selection} use an inverse multiquadric base kernel $k(x,y) =
(1 + \twonorm{x-y}^2)^{\beta}$ with $\beta=-\textfrac{1}{2}$ as in \cite{GorhamMa17}.
Julia \cite{Bezanson2014julia} code recreating the experiments in
\cref{subsec:hyperparameter-selection,subsec:sampler-selection} and Python
code recreating the experiments in \cref{subsec:stochastic-svgd} is
available at \url{https://github.com/jgorham/stochastic_stein_discrepancy}.

\subsection{Hyperparameter selection for approximate MCMC} \label{subsec:hyperparameter-selection}

\emph{Stochastic gradient Langevin dynamics} (SGLD) \citep{WellingTe11} with constant step size $\epsilon$ is an approximate MCMC method introduced as a scalable alternative to the popular Metropolis-adjusted Langevin algorithm \citep{RobertsTw96}.
A first step in using SGLD is selecting an appropriate step size $\epsilon$, as overly large values lead to severe distributional biases (see the right panel of the \cref{fig:compare-hyperparameters-gmm-posterior_stochastic}  triptych), while overly small values yield slow mixing (as in the left panel of the \cref{fig:compare-hyperparameters-gmm-posterior_stochastic} triptych). In \cite[Sec. 5.1]{WellingTe11}, the posterior over the
means of a Gaussian mixture model (GMM) was used to illustrate
the utility of SGLD, and in \cite[Sec. 5.3]{GorhamMa15}, the spanner graph
Stein discrepancy was employed to select an appropriate $\epsilon > 0$ for a
fixed computational budget. We recreate the experimental setup of \cite[Sec. 5.3]{GorhamMa15} to assess the ability of a stochastic KSD to effectively tune SGLD.

We used the same model parameterization as \citet{WellingTe11},
which was a posterior distribution with $L = 100$ likelihood terms contributing
to the posterior density. We adopted the same experimental methodology as
\cite[Sec. 5.3]{GorhamMa15}: for a range of $\epsilon$ values, we generated $50$ independent SGLD pilot chains of length $n=1000$. For each sample of size $n$, we computed the
IMQ KSD without any subsampling and the SKSD with batch sizes $m = 1$ and $m = 10$. In Figure
\ref{fig:compare-hyperparameters-gmm-posterior_stochastic}, we see that both
 SKSDs behave in step with the standard KSD: the choice of $\epsilon
= 5\times 10^{-3}$ minimizes the KSD over the average of the $50$ trials
for all variants of KSD. Moreover, the fastest  SKSD required
one hundredth the number of likelihood evaluations of the standard KSD. Hence, subsampling can lead to significant speed-ups with little degradation in inferential quality even when the total number of likelihood terms is moderate.

\begin{figure*}
  \centering
  \includegraphics[width=\textwidth]{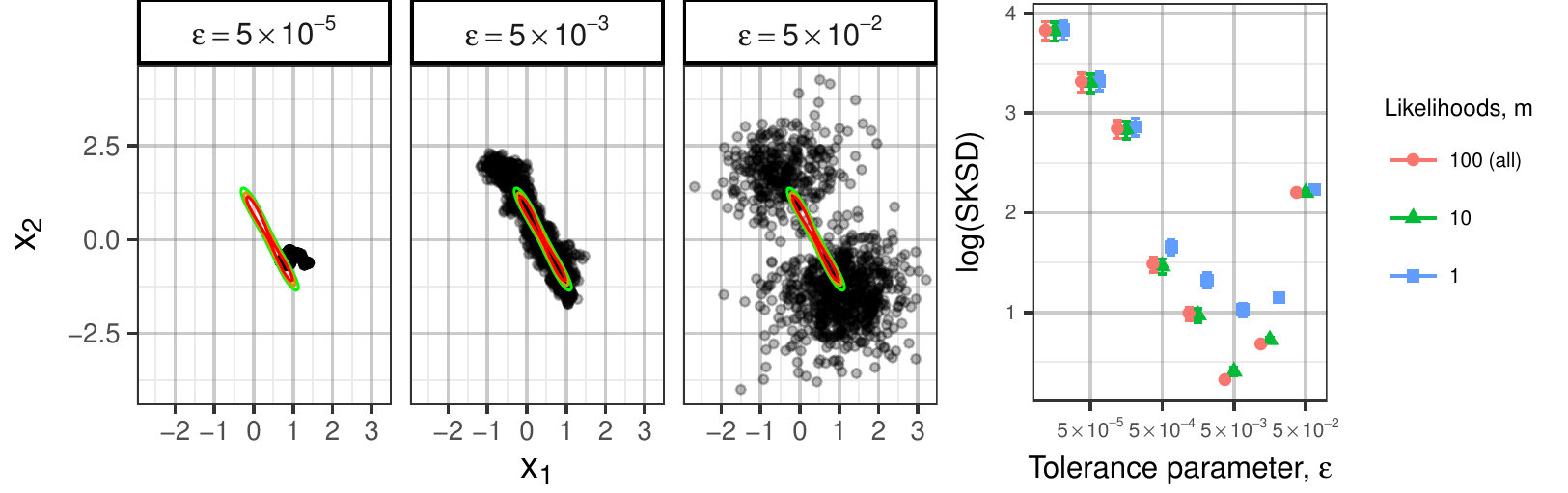}
  \caption{
  \textbf{Left:} Representative samples with $n=1,000$ points obtained
  from SGLD with varying step sizes
  $\epsilon$. The contours represent high density regions of the bimodal
  posterior distribution. Notice the leftmost plot suffers from a lack of
  mixing, while the rightmost plot is far too overdispersed to fit the
  posterior.
  \textbf{Right:} For different subsampling sizes $m$ of the $L=100$ likelihood
  terms contributing to the posterior, the mean
  IMQ SKSD ($\pm 1$ standard error) over $50$ trials for each choice of
  $\epsilon$ is shown on a log scale. Even at extreme subsampling rates,
  the SKSD produces the same ranking of candidates and selects the same
  $\epsilon$ as the exact KSD.
  }
  \label{fig:compare-hyperparameters-gmm-posterior_stochastic}
\end{figure*}

\subsection{Selecting biased MCMC samplers} \label{subsec:sampler-selection}

\citet[Sec.\ 4.4]{GorhamMa17} used the KSD
to choose between two biased sampling procedures. Namely, they
compared two variants of the approximate MCMC algorithm \emph{stochastic gradient Fisher scoring} (SGFS)
\citep{Ahn2012}. The full variant of this sampler---called
SGFS-f---requires inverting a $d\times d$ matrix to produce each sample
iterate. A more computationally expedient variant---called SGFS-d---instead
inverts that $d\times d$ matrix but first zeroes out all off-diagonal entries.
Both MCMC samplers are uncorrected discretizations of a continuous-time
process, and their invariant measures are asymptotically biased away from the
target $P$. Accordingly, the SSD can be employed to assess whether the greater
number of sample iterates generated by SGFS-d under a fixed computational
budget outweighs the additional cost from asymptotic bias.

In both \citep[Sec 4.4]{GorhamMa17} and \citep[Sec 5.1]{Ahn2012}, the chosen
target $P$ was a Bayesian logistic regression with a flat prior. The training set
was constructed by selecting a subset of $10,000$ images from the MNIST
dataset that had a $7$ or $9$ label, and then reducing each covariate vector of
$784$ pixel values to a dimension $50$ vector via random projections. After
including an intercept term, \citet{Ahn2012} generated a
posterior sample of $50,000$ sample iterates (each in $\reals^{51}$) for both samplers.
In \citep[Sec 4.4]{GorhamMa17}, the authors showed the KSD
preferred the sample iterates generated from SGFS-f for any number of
sample iterates, while in \citep[Sec 5.1]{Ahn2012}, the authors showed even the
best bivariate marginals generated by SGFS-d were inferior to SGFS-f at matching
the target posterior $P$.

\begin{figure*}
  \centering
    \includegraphics[width=\textwidth]{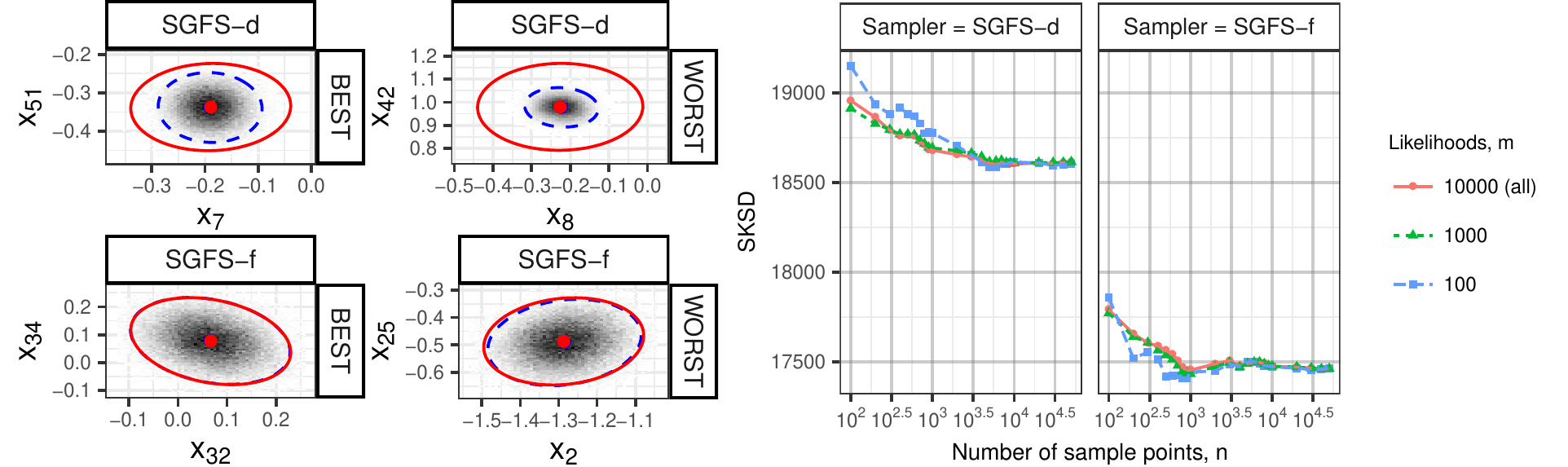}
  \caption{
  \textbf{Left:}
  The best and worst bivariate density plots of $50,000$ SGFS sample
  iterates that approximate the true target posterior distribution
  $P$. The overlaid lines are the bivariate marginal means and 95\%
  confidence ellipses; the dashed blue are derived from SGFS samples and the
  solid red is derived from a surrogate ground truth sample.
  \textbf{Right:} Plot of exact KSD (red) and stochastic KSDs (green and
  blue) for each SGFS sampler vs. the number of sample iterates $n$.
  }
  \label{fig:mnist_7_or_9_sgfs_stochastic}
\end{figure*}

In Figure \ref{fig:mnist_7_or_9_sgfs_stochastic}, we compare the exact KSDs
with the stochastic KSDs obtained from sampling $100$ and $1,000$ of the
$10,000$ likelihood terms i.i.d. for each posterior sample iterate. Notice
that the stochastic KSD prefers SGFS-f over SGFS-d for each subsampling
parameter as well, in accordance with the exact KSD. However, the most
aggressively subsampled stochastic KSD requires $100$ times fewer
likelihood evaluations than its standard analogue.

\subsection{Particle-based variational inference with SSVGD}
\label{subsec:stochastic-svgd}
SVGD was developed to iteratively improve a $n$-point particle approximation $Q_n$ to a given target distribution.
To illustrate the practical benefit of the stochastic SVGD algorithm analyzed in \cref{sec:ssvgd-converges} over standard SVGD, we reproduce the Bayesian neural network
experiment from \cite[Sec. 5]{LiuWa2016stein} on three datasets used in their experiment.
We adopt the exact experimental setup of \cite{LiuWa2016stein} and adapt their code to compare SSVGD (\cref{alg:ssvgd}) with minibatch sizes $m = 0.1 L$ and $m = 0.25 L$ with standard SVGD ($m = L$).
The procedure was run $20$ times for each configuration, and each time
we started with an independently sampled train-test split.
The training sets for the \texttt{boston}, \texttt{yacht}, and \texttt{naval} datasets had $409$, $209$, and $10,241$ datapoints and
$d=13, 6$, and $17$ covariates, respectively. The \texttt{boston} dataset was first published in
\cite{HaRu1978} while the latter two are available on the UCI repository \cite{Dua19}.
The root mean-squared error (RMSE) and log likelihood were computed on the
test set, and a summary is presented in \cref{fig:stochastic-svgd}. 
SSVGD yields more accurate approximations for all likelihood computation budgets considered, even for the modestly sized datasets, and this effect is magnified in the larger \texttt{naval} dataset.

\begin{figure*}
  \centering
    \includegraphics[width=\textwidth]{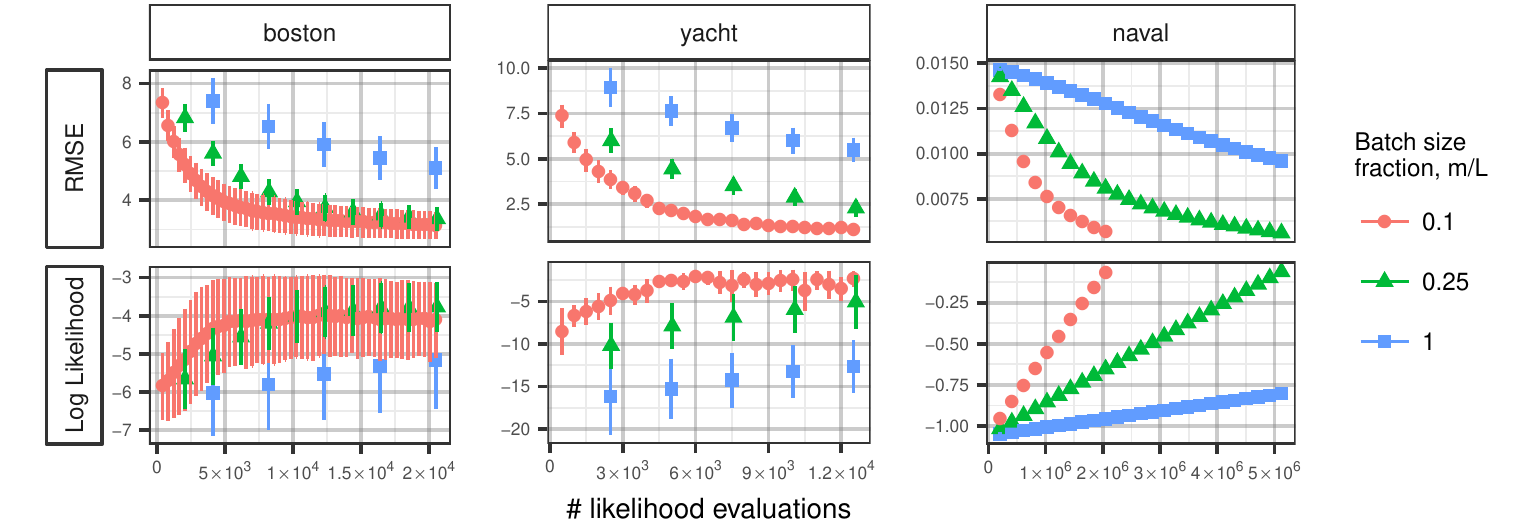}
  \caption{We plot the test RMSE and log likelihood
    for a Bayesian neural net as we approximate the posterior using
    stochastic SVGD over a set of sampling rates. The results shown above are the
    mean RMSE and log likelihood with $\pm 2$ standard errors of that mean across $20$ train test
    splits for the boston, yacht and naval datasets.
    We see that for each likelihood budget considered
    the lower batch sizes produce more accurate approximations than full batch SVGD.
  }
  \label{fig:stochastic-svgd}
\end{figure*}

\section{Discussion and Future Work}\label{sec:conclusion}
To reduce the cost of assessing and improving sample quality, we introduced stochastic Stein discrepancies which inherit the convergence-determining properties of standard SDs with probability $1$ while requiring orders of magnitude fewer likelihood evaluations.
While our work was focused on measuring sample quality, we believe that other inferential tasks based on decomposable Stein operators can benefit from these developments. 
Prime candidates include SD-based goodness-of-fit testing \cite{ChwialkowskiStGr2016,LiuLeJo16,jitkrittum2017linear,HugginsMa2018}, KSD-based sampling
\cite{ChenMaGoFXOa18,futami2019bayesian,ChenBaFXGoGiMaOa19}, improving Monte Carlo estimation with control variates \cite{AssarafCa1999,MiraSoIm2013,OatesGiCh2016},  improving sample quality through reweighting \cite{LiuLe2016,hodgkinson2020reproducing} or thinning \cite{RiabizChCoSwNiMaOa2020}, and parameter estimation in intractable models \cite{BarpFXDuGiMa2019}.
Integrating variance reduction techniques \cite[e.g.,][]{SchmidtRoBa17,DeFazioBaLa14} into the SSD computation is another promising direction, as the result could more closely mimic standard SDs while offering comparable computational savings.
Finally, while the Langevin operator received special attention in our
analysis, our results also extend to other popular Stein operators like the
diffusion operators of \cite{GorhamDuVoMa19} and the discrete operators of
\cite{yang2018goodness}.

\section*{Broader Impact}
This work provides both producers and consumers of approximate inference techniques with a valid diagnostic for assessing those approximations at scale. 
It also analyzes a scalable algorithm (SSVGD) for improving approximate inference.
We expect that many existing users of Stein discrepancies will want to employ stochastic Stein discrepancies to reduce their overall computational costs.
In addition, the ready availability of a scalable diagnostic may stimulate the more widespread use of approximate MCMC methods.
However, any inferential tool combined with the wrong data or inappropriate
model can lead to incorrect and harmful conclusions, so care must be taken
in interpreting the results of any downstream analysis.

\begin{ack}
We thank Sungjin Ahn, Anoop Korattikara, and Max Welling for sharing their
MNIST posterior samples.
Part of this work was completed while Anant Raj was an intern at Microsoft Research.

\end{ack}

\bibliographystyle{abbrvnat}
\bibliography{stein}

\newpage
\appendix
\section{Proof of \cref{sksd-formula}: SKSD closed form}
\label{sec:proof-sksd-formula}
Our proof will parallel that of \citet[Prop.~2]{GorhamMa17} for non-stochastic KSDs.
For each $j\in[d]$ and each $\sigma_i$, we define the coordinate operators
\balignt
\frac{L}{m}(\opsub{\sig_i}^jf)(x) 
	\defeq (\frac{L}{m} \grad_{x_j}\log p_{\sig_i}(x)+\grad_{x_j}) f(x)
\ealignt for $f:\reals^d\to\reals$. 
For each $g=(g_1,\dots,g_d)\in\ksteinsetnorm{k}{\norm{\cdot}}$ and $x\in\reals^d$, 
our $C^{(1,1)}$ assumption on $k$ and the proof of \citep[Cor. 4.36]{Christmann2008}
imply that
\balignt\label{eqn:langevin-componentwise}
  \opsubarg{\sigma_i}{g}{x}
    &\textstyle= \sum_{j=1}^d (\opsub{\sigma_i}^j{g_j})(x)
    = \sum_{j=1}^d \opsub{\sigma_i}^j{\inner{g_j}{k(x,\cdot)}_{\kset_k}}
    = \sum_{j=1}^d \inner{g_j}{\opsub{\sigma_i}^jk(x,\cdot)}_{\kset_k}.
\ealignt
Meanwhile, the result \citep[Lem. 4.34]{Christmann2008} yields
\balignt\label{eqn:kj-derivation}
\inner{\frac{L}{m}\opsub{\sigma_i}^jk(x_i,\cdot)}{\frac{L}{m}\opsub{\sigma_{i'}}^jk(x_{i'},\cdot)}
    &= (\frac{L}{m}\grad_{x_{ij}}\log p_{\sig_i}(x_i) + \grad_{x_{ij}})(\frac{L}{m}\grad_{x_{i'j}}\log p_{\sig_{i'}}(x_{i'}) +\grad_{x_{i'j}})k(x_i,x_{i'})
\ealignt
for all $i, i' \in[n]$ and $j\in[d]$.
Therefore, the advertised
\balignt\label{eqn:kj-closed-form}
w_j^2
   = \frac{1}{n^2}\sum_{i=1}^n\sum_{i'=1}^n \inner{\frac{L}{m}\opsub{\sigma_i}^jk(x_i,\cdot)}{\frac{L}{m}\opsub{\sigma_{i'}}^jk(x_{i'},\cdot)}
   = \knorm{\frac{1}{n}\sum_{i=1}^n{\frac{L}{m}\opsub{\sigma_i}^j{k(x_i,\cdot)}}}^2.
\ealignt
Finally, our assembled results and norm duality give
\balignt
\sksd{Q_n}
  &= \sup_{g\in\ksteinsetnorm{k}{\norm{\cdot}}}
  \sum_{j=1}^d\frac{1}{n}\sum_{i=1}^n \frac{L}{m}(\opsub{\sig_i}^jg_j)(x_i) \\
  &= \sup_{\knorm{g_j} = v_j, \norm{v}^* \leq 1} \textsum_{j=1}^d
     \inner{g_j}{\frac{1}{n}\sum_{i=1}^n \frac{L}{m}\opsub{\sigma_i}^j{k(x_i,\cdot)}}_{\kset_k} \\
  &= \sup_{\norm{v}^* \leq 1} \textsum_{j=1}^d v_j
    \knorm{\frac{1}{n}\sum_{i=1}^n \frac{L}{m}[\opsub{\sigma_i}^j{k(x_i,\cdot)}}\\
  &= \sup_{\norm{v}^*\le 1} \textsum_{j=1}^d v_j w_j
  = \norm{w}.
\ealignt

\section{Proof of \cref{ssds-detect-convergence}: SSDs detect convergence}
\label{sec:proof-ssds-detect-convergence}
We will find it useful to write
\begin{talign}\label{eqn:ssd-definition}
    \ssd{Q_n} 
        &= \sup_{g\in\gset} \left |\frac{1}{n}\sum_{i=1}^n
    \frac{L}{m}\sum_{\sset\in\equisubs{L}{m}} B_{i\sset}
    \opsubarg{\sset}{g}{x_i}\right | \qtext{for} B_{i\sset} \defeq \indic{\sset = \sset_i}\\
        &= \sup_{g\in\gset}\left |{L\choose m}^{-1}\sum_{\sset\in\equisubs{L}{m}} \mu_{n\sset} (\opsub{\sset}{g})\right|
        \qtext{for} \mu_{n\sset} \defeq {L \choose m}\frac{L}{m}\frac{1}{n}\sum_{i=1}^n B_{i\sset} \delta_{x_i}.
\end{talign}
We will also write $\blset \defeq \{h:\reals^d\to\reals : \infnorm{h} +
\Lip(h) \le 1\}$ as the unit ball in the bounded Lipschitz metric, and for
any $R > 0$, $B_R \defeq \{x\in\reals^d : \twonorm{x}\le R\}$ as the radius
$R$ ball centered at the origin.
For any set $K$, let $I_K(x) = \indic{x \in K}$.

Our proof relies on a lemma, proved in
\cref{sec:proof-random_measure_convergence}, that 
boosts almost sure convergence in distribution into almost sure uniform
convergence for the expectations of all continuous functions dominated by a uniformly integrable, locally bounded
$|f_0|$ with derivatives dominated by a locally bounded $|f_1|$.

\begin{lemma}[Convergence of random measures]\label{random_measure_convergence}
Consider two sequences of random measures $(\nu_n)_{n=1}^\infty$ and
$(\tilde{\nu}_n)_{n=1}^\infty$ on $\reals^d$, and
suppose there exists an $R > 0$ such that $\nu_n(hI_{B_R}) -
\tilde{\nu}_n(hI_{B_R}) \toas 0$ for each bounded and continuous $h$.
Then, for $\hset = \blset$,
\begin{align}\label{eq:uniform_compact_continuous_convergence}
\sup_{h \in \hset}
|\nu_{n}(h I_{B_R}) - \tilde{\nu}_n(h  I_{B_R})| \toas 0.
\end{align}
Suppose, in addition, that for every $S > 0$ there exists an $R \geq S$ such that
\cref{eq:uniform_compact_continuous_convergence} holds. Then if
$f_0$ is almost surely uniformly $\nu_n$-integrable and
uniformly $\tilde{\nu}_n$-integrable, and $f_0, f_1$ are bounded on each
compact set, we have
\begin{align}
\sup_{h \in \hset_f}
|\nu_{n}(h) - \tilde{\nu}_n(h)| \toas 0,
\end{align}
where $\hset_f \defeq \{h\in C(\reals^d) : |h(x)|\le |f_0(x)| , \frac{|h(x) -
  h(y)|}{\twonorm{x-y}} \le |f_1(x)| + |f_1(y)| \text{ for all } x,y\in\reals^d\}$.
\end{lemma}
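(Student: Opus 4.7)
The plan is to handle the two parts separately, reducing Part 2 to Part 1. For Part 1, I will use an Arzel\`a--Ascoli-style compactness argument: the family $\{h|_{B_R} : h \in \blset\}$ is uniformly bounded by $1$ and $1$-Lipschitz on the compact ball $B_R$, hence totally bounded in $C(B_R)$ under the sup norm. For any $\epsilon > 0$ I fix a finite $\epsilon$-net $\{h_1,\dots,h_N\} \subset \blset$ and write, for any $h \in \blset$ and a matching index $i$,
\begin{align*}
|\nu_n(hI_{B_R}) - \tilde\nu_n(hI_{B_R})| \leq \max_j |\nu_n(h_jI_{B_R}) - \tilde\nu_n(h_jI_{B_R})| + \epsilon\,(|\nu_n|(B_R) + |\tilde\nu_n|(B_R)).
\end{align*}
The first term converges to $0$ a.s.\ by the hypothesis applied to each $h_j$ combined with a finite union bound. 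The second term is $O(\epsilon)$: applying the hypothesis at $h \equiv 1$ gives $\nu_n(B_R) - \tilde\nu_n(B_R) \toas 0$, and in the intended applications the measures are positive with total mass uniformly bounded (cf.\ $\mu_{n\sset}$ in \cref{eqn:ssd-definition}), so $|\nu_n|(B_R) + |\tilde\nu_n|(B_R)$ is a.s.\ bounded. Sending $\epsilon \downarrow 0$ along a countable sequence yields the claim.

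For Part 2 I will split $\nu_n(h) = \nu_n(hI_{B_R}) + \nu_n(hI_{B_R^c})$ and similarly for $\tilde\nu_n$, control the tail by uniform integrability of $f_0$, and reduce the compactly supported piece to Part 1. Given $\epsilon > 0$, a.s.\ uniform integrability of $f_0$ yields $S$ with $\nu_n(|f_0|I_{B_S^c}) + \tilde\nu_n(|f_0|I_{B_S^c}) \leq 2\epsilon$ eventually. The enhanced hypothesis then supplies $R \geq S$ for which the Part 1 conclusion holds on $B_R$, and the pointwise bound $|h| \leq |f_0|$ controls the tail uniformly over $h \in \hset_f$. For the main piece, the definition of $\hset_f$ together with local boundedness of $f_0, f_1$ gives $|h| \leq M_0 := \sup_{B_R}|f_0|$ and Lipschitz constant on $B_R$ at most $2M_1$ with $M_1 := \sup_{B_R}|f_1|$. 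I will then extend $h|_{B_R}$ to all of $\reals^d$ via a McShane extension capped at $\pm M_0$ to obtain $\tilde h$ agreeing with $h$ on $B_R$ and satisfying $\infnorm{\tilde h} \leq M_0$, $\Lip(\tilde h) \leq 2M_1$. Since $\tilde h/(M_0+2M_1) \in \blset$ and $hI_{B_R} = \tilde h I_{B_R}$, Part 1 gives $\sup_{h\in\hset_f}|\nu_n(hI_{B_R}) - \tilde\nu_n(hI_{B_R})| \toas 0$.

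The principal obstacle is coordinating the probability-one events: uniform integrability of $f_0$, the finite-net convergences from Part 1, and the choice of $R$ (which depends on $\epsilon$) must all be realized on a common full-measure set. I plan to discretize along a countable sequence $\epsilon_k \downarrow 0$ with matching $R_k$, intersect the resulting a.s.\ events, and then conclude the overall uniform statement along this subsequence. A secondary subtlety is ensuring that the McShane extension stays inside a \emph{scaled} $\blset$ after truncation --- capping by $\min(\max(\cdot,-M_0),M_0)$ preserves the Lipschitz constant because the truncation map is $1$-Lipschitz on $\reals$ --- so the reduction to Part 1 goes through uniformly over $h \in \hset_f$.
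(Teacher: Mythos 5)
Your proposal follows essentially the same route as the paper's proof: Part 1 via an Arzel\`a--Ascoli finite-net argument on $B_R$ combined with a union bound and the per-function a.s.\ convergence hypothesis, and Part 2 via tail truncation using the uniform integrability of $f_0$ together with the observation that $\hset_f$-restrictions to $B_R$ lie in a scaled copy of $\blset$, reducing to Part 1. The differences are only presentational --- you make explicit the bounded-total-mass control needed in the net step, the (truncated) McShane extension behind the paper's asserted inclusion $\{hI_{B_R}: h\in\hset_f\}\subseteq\{c_R\, hI_{B_R}: h\in\hset\}$, and the countable $\eps_k, R_k$ discretization coordinating the almost-sure events, all of which the paper leaves implicit.
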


Since $\lswass{a}(Q_n, P) \to 0$, \citep[Proof~of~Cor.~1]{ekisheva2006transportation} implies that $Q_n(h) \to P(h)$ for all bounded continuous $h$ and that $f_0(x) = c(1+\twonorm{x}^a)$ is uniformly $Q_n$-integrable and $P$-integrable.
Moreover, for each $\sset \in \equisubs{L}{m}$, $\mu_{n\sset}(h) -
\frac{L}{m}Q_n(h) \toas 0$ for all bounded $h$ by
\cref{bounded_func_convergence}, and thus $\mu_{n\sset}(h I_{B_R}) - Q_n(h
I_{B_R}) \toas 0$ for all bounded $h\in C(\reals^d)$ and any $R > 0$.
Since, for any compact set $K$, $\mu_{n\sset}(|f_0|I_{K^c}) \leq
{L\choose m}\textfrac{L}{m} Q_n(|f_0|I_{K^c})$, $f_0$ is also uniformly
$\mu_{n\sset}$-integrable. By assumption $f_1(x) = \omega(\twonorm{x})$
for $\omega(R) \defeq \sup_{n}\sup_{g \in \gset_n, x,y\in B_{2R}}
\textfrac{|\opsubarg{\sset}{g}{x} - \opsubarg{\sset}{g}{y}|}{\twonorm{x-y}}$
is bounded on any compact set.

Moreover, since $P$ is a finite measure, there are at most countably many
values $R$ for which $P(\{x : \twonorm{x} = R\}) > 0$.
Hence, for any $S >
0$ we can choose $R \geq S$ such that $B_R$ is a continuity set under
$P$. 
For any such $R$, $Q_n(h I_{B_R}) - P(h I_{B_R})\to 0$ for any bounded
$h\in C(\reals^d)$ by the Portmanteau theorem \citep[Thm. 13.16]{Klenke13},
since $\lswass{a}(Q_n, P) \to 0$ implies convergence in distribution. 

Finally, the assumption $P(\operator{g}) = 0$ for all
$g\in\gset_n$, the triangle inequality, the continuity and polynomial growth
of each function in $\opsub{\sset}{\gset_n}$, and
\cref{random_measure_convergence} applied first to $\mu_{n\sset}$ and $(Q_n)_{n=1}^\infty$ for each $\sigma$ and then to $(Q_n)_{n=1}^\infty$ and $P$ together yield
\balignt
&\ssdn{Q_n}
    =\sup_{g\in\gset_n} |{L\choose m}^{-1}\sum_{\sset\in\equisubs{L}{m}}\mu_{n\sset}(\opsub{\sset}{g}) - \frac{L}{m}Q_n(\opsub{\sset}{g}) + \frac{L}{m}Q_n(\opsub{\sset}{g}) - \frac{L}{m}P(\opsub{\sset}{g})| \\
    &\qquad\leq  {L\choose m}^{-1}\sum_{\sset\in\equisubs{L}{m}} \sup_{h \in \hset_f}|\mu_{n\sset}(h) - \frac{L}{m}Q_n(h)|
    + \frac{L}{m}|Q_n(h) - P(h)|
    \toas 0.
\ealignt

\subsection{Proof of \cref{random_measure_convergence}: Convergence of random measures}
\label{sec:proof-random_measure_convergence}
Fix any $R, \eps > 0$ and let $K=B_R$. By the Arzelà–Ascoli theorem
\citep[Thm. 8.10.6]{Durrett19}, there exists a finite $\eps/2$-subcover
of the set of $K$-restrictions $\{h|_K : h\in\hset\}$.  Since any bounded
continuous function on $K$ can be extended to a bounded continuous function
on $\reals^d$, there therefore exists a sequence of bounded continuous
functions $(h_k)_{k=1}^m$ on $\reals^d$ such that
\balignt
    \P(\sup_{h \in \hset} |\nu_{n}(h I_K) - \tilde{\nu}_n(h I_K)| > \eps \io)
&\leq
    \P(\max_{1\leq k\leq m} |\nu_{n}(h_k I_K) - \tilde{\nu}_n(h_k I_K)| > \eps/2 \io) \\
&\leq 
    \sum_{k=1}^m \P(|\nu_{n}(h_k) - \tilde{\nu}_n(h_k)| > \eps/2 \io)
    = 0,
\ealignt
where we have used the union bound and our almost sure convergence assumption for bounded continuous functions.
The first result \cref{eq:uniform_compact_continuous_convergence} now follows since $\eps$ was arbitrary.

We next assume that the event $\event$ on which $f_0$ is uniformly $\nu_n$
and $\tilde{\nu}_n$-integrable occurs with probability $1$, and fix any
$\eps > 0$. On $\event$ there exists $R_{\eps} > 0$ such that
\cref{eq:uniform_compact_continuous_convergence} holds and
$\sup_n \max(\nu_n(|f_0|I_{K_\eps^c}),
\tilde{\nu}_n(|f_0|I_{K_\eps^c})) \leq \eps/2$ for $K_{\eps} \defeq
B_{R_\eps}$.
Furthermore, on $\event$,
\balignt
    \sup_{h\in\hset_f}|\nu_{n}(h) - \nu_{n}(hI_{K_\eps})| + |\tilde{\nu}_{n}(h) - \tilde{\nu}_{n}(hI_{K_\eps})|
&\leq \sup_{h\in\hset_f} \nu_{n}(|h|I_{K_\eps^c}) + \tilde{\nu}_{n}(|h|I_{K_\eps^c}) \\
&\leq \nu_{n}(|f_0|I_{K_\eps^c}) + \tilde{\nu}_{n}(|f_0|I_{K_\eps^c}) \leq \eps.
\ealignt
Therefore, the triangle inequality, fact that for each $R > 0$ there is a
constant $c_R > 0$ such that $\{hI_{B_R} : h\in \hset_f\}\subseteq \{c_R hI_{B_R}:h\in\hset\}$,
and our first result \cref{eq:uniform_compact_continuous_convergence} give
\balignt
    \P(\sup_{h \in\hset_f} |\nu_{n}(h) - \tilde{\nu}_n(h)| > 2\eps \io)
    &\leq\P(\event^c) + \P(\sup_{h \in \hset_f} |\nu_{n}(hI_{K_\eps}) -
    \tilde{\nu}_n(hI_{K_\eps})| > \eps \io) \\
    &\leq\P(\event^c) + \P(c_{R_\eps}\sup_{h \in \hset} |\nu_{n}(hI_{K_\eps}) -
    \tilde{\nu}_n(hI_{K_\eps})| > \eps \io) \\
    &=0.
\ealignt
The second result now follows since $\eps$ was arbitrary.

\section{Proof of \cref{bounded-sds-detect-tight-non-convergence}: Bounded SDs detect tight non-convergence}
\label{sec:proof-bounded-sds-detect-tight-non-convergence}

We consider each Stein set candidate in turn.
\subsection{Kernel Stein set}
Suppose $\gset_n$ satisfies \cref{kernel-set}.  Since, for any vector norm
$\norm{\cdot}$ on $\reals^d$, there exists $c_d$ such that $\{ g \in
\ksteinsetnorm{k}{\twonorm{\cdot}}: \max_{\sset\in\equisubs{L}{m}}\supnorm{\opsub{\sset}g}\leq 1\} \subseteq
c_d \{ g \in \ksteinsetnorm{k}{\norm{\cdot}}: \max_{\sset\in\equisubs{L}{m}}\supnorm{\opsub{\sset}g}\leq
1\}$ \citep{Bachman1966functional}, it suffices to assume $\norm{\cdot} =
\twonorm{\cdot}$.

\paragraph{Choosing a convergence-determining IPM $d_\hset$}
Consider the test function set $\hset$ from 
\cite[Sec E.1, Proof of Thm. 5]{GorhamMa17} which satisfies \begin{enumerate}
    \item $\| h \|_{\infty}\le 1$ and $\Lip(h)\le 1 + \sqrt{d - 1}$ for all $h\in\hset$ and
    \item $Q_n\not\Rightarrow P$ implies $d_{\hset}(Q_n, P)\not\to 0$ for any sequence of probability measures $(Q_n)_{n\geq 1}$.
\end{enumerate}

\paragraph{Solving the Stein equation $\langevin{g_h} = h - P(h)$}
\newcommand{\MP}{\mathcal{M}_P}
Let us define $\Xi(x) \defeq (1 + \twonorm{x}^2)^{1/2}$. 
By \cite[Sec E.1, Proof of Thm. 5]{GorhamMa17}, for each $h\in\hset$ there exists an
accompanying function $g_h$ such that $\langevin{g_h} = h - P(h)$ and $\|\Xi g_h\|_{\infty}\le \MP$ for a constant $\MP > 0$ independent of $h$.

\paragraph{Smoothing the Stein function $g_h$}
Fix any $\rho \in (0,1]$, and let $U\sim \Gsn(0,I)$.
Since $\grad \log p$ is Lipschitz,
the argument in \cite[Proof of Thm. 13]{GorhamMa17} constructs a smoothed approximation $g_{h,\rho}(x) = \E[g_h(x-\rho U)]$ satisfying
\begin{align}
\supnorm{\langevin{g_{h,\rho}} - \langevin{g_h}} \leq C_1 \rho \label{eq:rho-approx-error}
\end{align}
for a constant $C_1$ independent of $h$ and $\rho$.
Moreover, the following lemma shows that 
\begin{align}
\|\Xi g_{h,\rho}\|_{\infty}
    \leq \supnorm{\Xi g_h} \sqrt{2} \E[1+\twonorm{U}]
    \le \MP' \defeq \sqrt{2} \MP(1+\sqrt{d}),
\end{align}
where $\MP$ is notably independent of $\rho$ and $h$.
\begin{lemma}[Smoothing preserves decay]
\label{smoothing-preserves-decay}
For each $g : \reals^d \to \reals^d$, $\eps \in [0,1]$, and absolutely integrable random vector $Y \in \reals^d$,
\balignt
\sup_{x\in\reals^d} \Earg{A(x)\twonorm{g(x - \eps Y)}}
  \le \sqrt{2}\supnorm{\Xi g} \Earg{A(Y)}
  \qtext{for}
  A(x) \defeq 1+\twonorm{x}.
  \label{eq:tilted-approx-stein-bound}
\ealignt
\end{lemma}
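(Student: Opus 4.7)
The plan is to reduce the lemma to two elementary inequalities and a single application of the triangle inequality. The key observation is that the definition of $\supnorm{\Xi g}$ yields the pointwise pick-off bound $\twonorm{g(u)} \le \supnorm{\Xi g}/\Xi(u)$ for all $u$, so substituting $u = x-\eps Y$ reduces the target bound to controlling the random quantity $A(x)/\Xi(x - \eps Y)$.

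Next, I would split $A(x)$ using the triangle inequality. Since $\eps \in [0,1]$, we have $\twonorm{x} \le \twonorm{x-\eps Y} + \eps\twonorm{Y} \le \twonorm{x-\eps Y} + \twonorm{Y}$, hence
\balignst
A(x) = 1 + \twonorm{x} \le A(x-\eps Y) + \twonorm{Y}.
\ealignst
This decomposes $A(x)/\Xi(x-\eps Y)$ into two pieces, one that only depends on $x - \eps Y$ and one that is controlled by $\twonorm{Y}$ alone.

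For the first piece I would invoke the elementary inequality $(1+t)^2 \le 2(1+t^2)$ for $t \ge 0$, applied with $t = \twonorm{x-\eps Y}$, which gives $A(x-\eps Y)/\Xi(x-\eps Y) \le \sqrt{2}$ uniformly in $x$ and $Y$. For the second piece, I would use the trivial bound $\Xi(\cdot) \ge 1$ to get $\twonorm{Y}/\Xi(x-\eps Y) \le \twonorm{Y}$. Taking expectations and then supremizing over $x$ yields
\balignst
\sup_{x\in\reals^d}\Earg{A(x)\twonorm{g(x-\eps Y)}} \le \supnorm{\Xi g}\left(\sqrt{2} + \Earg{\twonorm{Y}}\right),
\ealignst
and since $\sqrt{2} \ge 1$, the right-hand side is bounded by $\sqrt{2}\,\supnorm{\Xi g}\Earg{A(Y)} = \sqrt{2}\,\supnorm{\Xi g}(1 + \Earg{\twonorm{Y}})$, as required.

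There is no real obstacle here: the result hinges on the pick-off bound from the definition of $\supnorm{\Xi g}$ together with the sub-additivity constant $\sqrt{2}$ coming from $(1+t)^2 \le 2(1+t^2)$. The integrability of $Y$ ensures $\Earg{\twonorm{Y}} < \infty$ so that Fubini applies when pushing the expectation through.
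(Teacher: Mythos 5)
Your proof is correct and follows essentially the same route as the paper's: pick off $\supnorm{\Xi g}$ pointwise, then control $A(x)/\Xi(x-\eps Y)$ using $\Xi(z)\ge 2^{-1/2}(1+\twonorm{z})$ together with the triangle inequality. The only cosmetic difference is that you split the numerator via $A(x)\le A(x-\eps Y)+\twonorm{Y}$ before bounding (yielding the marginally sharper intermediate constant $\sqrt{2}+\Earg{\twonorm{Y}}$ before relaxing), whereas the paper changes variables first; the ingredients and the final constant $\sqrt{2}\,\Earg{A(Y)}$ are identical.
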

\begin{proof}
For $B(y) \defeq \sup_{x,u\in(0, 1]} A(x) / \Xi(x - u y)$, we have
\balignt
\sup_{x\in\reals^d} \Earg{(1+\twonorm{x})\twonorm{g(x - \eps Y)}}
  &= \sup_{x\in\reals^d} \Earg{\textfrac{(1+\twonorm{x})}{\Xi(x - \eps Y)}\Xi(x - \eps Y)\twonorm{g(x -
      \eps Y)}} \\
  &\le \sup_{x\in\reals^d} \supnorm{\Xi g}\Earg{\textfrac{(1+\twonorm{x})}{\Xi(x - \eps Y)}}
  \le \supnorm{\Xi g} \Earg{B(Y)}.
\ealignt
Moreover, $\Xi(z)\ge 2^{-1/2}(1 + \twonorm{z})$ for
  all $z$ implies that, for any $y$,
\balignt
B(y) = \sup_{x,u\in(0,1]}\frac{A(x)}{\Xi(x-uy)}
  &\le \sup_{x,u\in(0,1]}\sqrt{2}\frac{A(x)}{1 + \twonorm{x-uy}}
  = \sup_{z,u\in(0,1]}\sqrt{2}\frac{A(z+uy)}{1 + \twonorm{z}} \\
  &\le \sup_{z,u\in(0,1]}\sqrt{2}\frac{A(z) + u\twonorm{y}}{1 +
      \twonorm{z}}
  \le \sqrt{2}A(y),
  \ealignt
where we used the triangle inequality in the penultimate inequality.
\end{proof}

\paragraph{Truncating the smoothed Stein function $g_{h,\rho}$}
Fix any $\eps \in (0,1)$, and, since $(Q_n)_{n=1}^\infty$ is tight, select a
compact set $K_{\eps}$ satisfying $\sup_n Q_n(K_\eps^c) \leq \eps$. 
The argument in \cite[Proof of Thm. 13]{GorhamMa17} identifies a truncation $g_{h,\rho,\eps}$ and a constant $C_0$ independent of $h$, $\eps$, and $\rho \in (0,1]$ such that, for all $x\in\reals^d$,
\begin{align}
&\twonorm{g_{h,\rho,\eps}(x)}
    \le \twonorm{g_{h,\rho}(x)} \qtext{and} \\
&|\langarg{g_{h,\rho,\eps}}{x} - \langarg{g_{h,\rho}}{x}|
    \le C_0 \indic{x \in K_\eps^c}.\label{eq:g0-approx-error}
\end{align} 
Hence, $\|\Xi g_{h,\rho,\eps}\|_{\infty}\le \|\Xi g_{h,\rho}\|_{\infty} \le \MP'$.

\paragraph{Smoothing the truncation $g_{h,\rho,\eps}$}
By assumption, for all $\sset \in \equisubs{L}{m}$, there is a constant
$\beta > 0$ such that $\twonorm{\grad\log p_{\sset}(x)} \le
\beta (1 + \twonorm{x})$ for all $x$. Defining
$A_\beta(x) \defeq \textfrac{L}{m}\beta (1 +
\twonorm{x})$, we note that, since $\grad\log p =
\textfrac{L}{m}{L\choose m}^{-1}\sum_{\sset\in\equisubs{L}{m}} \grad\log
p_{\sset}$, an application of the triangle inequality yields
$\twonorm{\grad\log p(x)} \le A_\beta(x)$ for all $x$. Moreover, since $L/m \ge 1$
we have $\twonorm{\grad\log p_{\sset}(x)}\le A_\beta(x)$ for all $x$ and $\sset$.

From the construction in \cite[Proof of Lem. 12]{GorhamMa17}, there is a
random variable $Y$ with finite first moment such that the function 
$\tilde{g}_{h,\rho,\eps}(x)\defeq
\Earg{g_{h,\rho,\eps}(x - \eps Y)}$ satisfies 
\begin{align}
\|\langevin{\tilde{g}_{h,\rho,\eps}} - \langevin{g_{h,\rho,\eps}}\|_{\infty}
    \le C_\rho \eps \label{eq:geps-approx-error}
\end{align}
 and $\tilde{g}_{h,\rho,\eps} \in C_{\eps,\rho}\gset_n$ for constants $C_\rho$
independent of $\eps$ and $h$ and $C_{\eps,\rho}$ independent of $h$.

\paragraph{Showing the smoothed truncation $\tilde{g}_{h,\rho,\eps}$ is in a scaled copy of $\gset_{b,n}$}
By \cref{smoothing-preserves-decay}, we have
\balignt
\|A_\beta\tilde{g}_{h,\rho,\eps}\|_{\infty}
    \leq \|\Xi g_{h,\rho,\eps}\|_{\infty} \sqrt{2}\E[A_\beta(Y)]
    \leq \widetilde{\MP} \defeq \MP' \sqrt{2}\E[A_\beta(Y)],
\ealignt
where $\widetilde{\MP}$ is independent of $h,\eps,$ and $\rho$.
Thus for any $\sset$, Cauchy-Schwarz, our bound \cref{eq:tilted-approx-stein-bound},
the triangle inequality, and the fact that $\|\grad\log
p_{\sset}/A_\beta\|_{\infty} \le 1$ and $\|\grad\log p/A_\beta\|_{\infty} \le 1$ imply
\balignt
\|\textfrac{L}{m}\opsub{\sset}\tilde{g}_{h,\rho,\eps} &- \langevin{\tilde{g}_{h,\rho,\eps}}\|_{\infty}
  = \|\inner{\textfrac{L}{m}\grad\log p_{\sset} - \grad\log p}{\tilde{g}_{h,\rho,\eps}}\|_{\infty} \\
  &\le \|(\textfrac{L}{m}\grad\log p_{\sset} - \grad\log p)/A_\beta\|_{\infty} \|A_\beta\tilde{g}_{h,\rho,\eps}\|_{\infty} \\
  &\le \widetilde{\MP} (\textfrac{L}{m} \|\grad\log p_{\sset}/A_\beta\|_{\infty} +
  \|\grad\log p/A_\beta\|_{\infty})
  \le (\textfrac{L}{m} + 1)\widetilde{\MP}.
\ealignt
Thus, the triangle inequality and our error bounds \cref{eq:rho-approx-error,eq:g0-approx-error,eq:geps-approx-error} yield
\begin{align}
\|\langevin{\tilde{g}_{h,\rho,\eps}}\|_{\infty} 
    &\le \supnorm{\langevin{g_{h}} - \langevin{g_{h,\rho}}}
    + \supnorm{\langevin{g_{h,\rho}} - \langevin{g_{h,\rho,\eps}}}
    + \supnorm{\langevin{g_{h,\rho,\eps}}-\langevin{\tilde{g}_{h,\rho,\eps}}} 
    + \supnorm{\langevin{g_{h}}} \\
    &\leq C_1\rho + C_0 + C_\rho \eps + 2 \qtext{and}\\
\|\opsub{\sset}\tilde{g}_{h,\rho,\eps}\|_{\infty} 
    &\le \|\opsub{\sset}\tilde{g}_{h,\rho,\eps}
- \textfrac{m}{L}\langevin{\tilde{g}_{h,\rho,\eps}}\|_{\infty} + \textfrac{m}{L}\|\langevin{\tilde{g}_{h,\rho,\eps}}\|_{\infty} \\
    &\le 
    \tilde{C}_{\eps,\rho} \defeq 
(1+\textfrac{m}{L})\widetilde{\MP} +
\textfrac{m}{L} (C_1\rho + C_0 + C_\rho \eps + 2)
\end{align}
for each $\sset$.
Therefore, $\tilde{g}_{h,\rho,\eps} \in \max(C_{\eps,\rho}, \tilde{C}_{\eps,\rho}) \gset_{b,n}$.

\paragraph{Upper bounding the IPM $d_{\hset}$} 
Finally, we combine the triangle inequality and our approximation bounds \cref{eq:rho-approx-error,eq:g0-approx-error,eq:geps-approx-error} once more to conclude
\begin{align}
&d_{\hset}(Q_n, P) 
\defeq
\sup_{h\in\hset}|Q_n(h) - P(h)|
= \sup_{h\in\hset}|Q_n(\langevin{g_h})| 
\\
&\leq 
\sup_{h\in\hset}|Q_n(\langevin{\tilde{g}_{h,\rho,\eps}})| 
+ |Q_n(\langevin{\tilde{g}_{h,\rho,\eps}} - \langevin{g_{h,\rho,\eps}})| 
+ |Q_n(\langevin{g_{h,\rho}} - \langevin{g_{h,\rho,\eps}})|
+ |Q_n(\langevin{g_h} - \langevin{g_{h,\rho}})|\\
&\leq
\sup_{h\in\hset}|Q_n(\langevin{\tilde{g}_{h,\rho,\eps}})| + C_\rho\eps + C_0 Q_n(K_\eps^c) + C_1 \rho\\
&\leq 
\max(C_{\eps,\rho}, \tilde{C}_{\eps,\rho})
\langstein{Q_n}{\gset_{b,n}} + (C_0 + C_\rho)\eps + C_1\rho.
\end{align}
Since $\eps$ and $\rho$ were arbitrary, 
whenever $\langstein{Q_n}{\gset_{b,n}} \to 0$, we have 
$d_{\hset}(Q_n, P) \to 0$ and hence $Q_n\Rightarrow P$.

\subsection{Classical Stein set}
Suppose $\gset_n$ satisfies \cref{classical-set}, and consider
$\ksteinsetnorm{k}{\twonorm{\cdot}}$ for $k(x,y) = \Phi(x-y) \defeq (1 +
\twonorm{\Gamma(x-y)}^2)^{\beta}$ with $\beta < 0$ and $\Gamma \succ 0$.  Since $\grad^s
\Phi(0)$ is bounded for $s\in\{0,2,4\}$, \citep[Cor. 4.36]{Christmann2008}
implies that $\ksteinsetnorm{k}{\twonorm{\cdot}} \subseteq c_0 \gset_n$ for
some $c_0$.  The result now follows since
$\ksteinsetnorm{k}{\twonorm{\cdot}}$ also satisfies \cref{kernel-set}.

\subsection{Graph Stein set}
If $\gset_n$ satisfies \cref{graph-set}, the result follows as $\gset_n$ contains the classical Stein set $\steinset$.

\section{Proof of \cref{ssds-detect-bounded-sd-non-convergence}: SSDs detect bounded SD non-convergence}
\label{sec:proof-ssds-detect-bounded-sd-non-convergence}
Since $\opstein{Q_n}{\gset_{b,n}} \not\to 0$, there exists $\eps > 0$ such that $\opstein{Q_n}{\gset_{b,n}} > \eps$ infinitely often (i.o.).
Fix any such $\eps$.
For each $n$, choose $h_n=\langevin{g_n}$ for $g_n \in \gset_{b,n}$ satisfying $Q_n(h_n) \geq \opstein{Q_n}{\gset_{b,n}} - \eps/2$.
Then since $\operator = {L\choose m}^{-1}\frac{L}{m}\sum_{\sset\in\equisubs{L}{m}}\opsub{\sset}$,
\balignt
\opstein{Q_n}{\gset_{b,n}} - \eps/2
    &\leq Q_n(h_n) - {L\choose m}^{-1}\sum_{\sset\in\equisubs{L}{m}}\mu_{n\sset}(\opsub{\sset}g_n) + {L\choose m}^{-1}\sum_{\sset\in\equisubs{L}{m}}\mu_{n\sset}(\opsub{\sset}g_n) \\
    &\leq {L\choose m}^{-1}\sum_{\sset\in\equisubs{L}{m}}(\frac{L}{m}Q_n(\opsub{\sset}g_n) - \mu_{n\sset}(\opsub{\sset}g_n)) + \ssd{Q_n}.
\ealignt
Moreover, since $\supnorm{\opsub{\sset}g_n}\leq 1$ for all
$\sset\in\equisubs{L}{m}$ and $n$, \cref{bounded_func_convergence}, proved
in \cref{sec:proof-bounded_func_convergence}, implies that
$\frac{L}{m}Q_n(\opsub{\sset}g_n)- \mu_{n\sset}(\opsub{\sset}g_n) \toas 0$
for each $\sset$.
\begin{lemma}[Bounded function convergence] \label{bounded_func_convergence}
Fix any triangular array of points $(x_i^n)_{i\in[n], n \geq 1}$ in $\reals^d$,
and, for each $n\geq 1$, define the measures 
\balignt
\nu_n = \frac{1}{n} \sum_{i=1}^n \delta_{x_i^n}
\qtext{and} 
\tilde{\nu}_{n} = \frac{1}{n} \sum_{i=1}^n \frac{B_{i}}{\tau}\delta_{x_i^n}
\ealignt
where $B_i \distiid \Ber(\tau)$ are independent Bernoulli random variables with
$\P(B_i = 1) = \tau$.
If $\supnorm{h_n}\leq 1$ for each $n$, then, with probability $1$,
\balignt
|\tilde{\nu}_{n}(h_n) - \nu_n(h_n)| \leq \tau^{-1}\sqrt{\frac{\log(n)+2\log(\log(n)))}{2n}}
\ealignt
for all $n$ sufficiently large.
Hence, $\tilde{\nu}_{n}(h_n) - \nu_n(h_n) \toas 0$.
\end{lemma}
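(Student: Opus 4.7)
The plan is to recognize that, conditional on the deterministic triangular array $(x_i^n)$, the only randomness in $\tilde{\nu}_n(h_n) - \nu_n(h_n)$ comes from the \iid Bernoullis $B_i$, so the difference is an average of independent bounded random variables amenable to Hoeffding's inequality combined with a Borel-Cantelli argument.

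First, I would write
\balignt
\tilde{\nu}_n(h_n) - \nu_n(h_n)
    = \frac{1}{n\tau}\sum_{i=1}^n (B_i - \tau)\, h_n(x_i^n),
\ealignt
and observe that, since $\supnorm{h_n}\le 1$, each summand $(B_i-\tau)h_n(x_i^n)$ is an independent zero-mean random variable taking values in an interval of length at most $1$. Hoeffding's inequality then yields, for every $t>0$,
\balignt
\P\Bigl(\,\bigl|\tilde{\nu}_n(h_n) - \nu_n(h_n)\bigr| \ge t\,\Bigr)
    \le 2 \exp\bigl(-2 n \tau^2 t^2\bigr).
\ealignt

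Next, I would set $t_n = \tau^{-1}\sqrt{(\log n + 2\log\log n)/(2n)}$ so that $2 n \tau^2 t_n^2 = \log n + 2\log\log n$, giving the tail bound
\balignt
\P\bigl(\,\bigl|\tilde{\nu}_n(h_n) - \nu_n(h_n)\bigr| \ge t_n\bigr)
    \le \frac{2}{n\,(\log n)^2}.
\ealignt
Since $\sum_{n\ge 3} 1/(n(\log n)^2) < \infty$ by the integral test, the Borel--Cantelli lemma implies that, with probability $1$, the event $|\tilde{\nu}_n(h_n) - \nu_n(h_n)| \ge t_n$ occurs only finitely often, i.e.\ the claimed inequality holds for all $n$ sufficiently large. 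Since $t_n \to 0$, almost sure convergence $\tilde{\nu}_n(h_n) - \nu_n(h_n) \toas 0$ follows immediately.

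There is no real obstacle here: the randomness is cleanly separated from the (arbitrary and possibly data-dependent) choice of $h_n$ and $x_i^n$ because $h_n$ and the array are \emph{nonrandom} in the statement, so no uniformity over $h_n$ is required. The only mild subtlety is calibrating the threshold $t_n$ so that the Hoeffding tail is summable; the choice $\log n + 2\log\log n$ is the mildest inflation of the $\sqrt{\log n/n}$ rate that makes $\sum 1/(n(\log n)^2)$ converge, which is exactly what is needed to invoke Borel--Cantelli.
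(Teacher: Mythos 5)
Your proof is correct and follows essentially the same route as the paper's: apply Hoeffding's inequality to the independent bounded summands (the paper does this through an intermediate concentration lemma with a general exponent $a$, specialized to $a=2$), choose the threshold $\tau^{-1}\sqrt{(\log n + 2\log\log n)/(2n)}$ so the tail probability is of order $1/(n\log^2 n)$, and conclude via Borel--Cantelli. No substantive differences.
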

Hence
\balignt
\P(\ssdn{Q_n} \not\to 0)
&\geq \P(\ssdn{Q_n} > \eps/2 \io) \\
&\geq \P(Q_n(\opsub{\sset}g_n) - \mu_{n\sset}(\opsub{\sset}g_n) <
\frac{\eps}{2} \text{ eventually}, \forall \sset) = 1
\ealignt
as advertised.

\subsection{Proof of \cref{bounded_func_convergence}: Bounded function convergence}
\label{sec:proof-bounded_func_convergence}
The result will follow from the following lemma which establishes rates of convergence for subsampled measure expectations to their non-subsampled counterparts.
\begin{lemma}\label{mu_concentration}
Under the notation of \cref{bounded_func_convergence}, for any $a \in [1,2]$, $\delta\in(0,1)$, and $h:\reals^d\to \reals$,
\balignt
\tilde{\nu}_{n}(h) - \nu_n(h) &\leq \frac{\tau^{-1}\sqrt{\half\log(1/\delta)}}{n^{1-1/a}}(\nu_n(|h|^a))^{1/a}
\qtext{with probability at least}
1-\delta 
\qtext{and} \\
\nu_n(h) - \tilde{\nu}_{n}(h) &\leq \frac{\tau^{-1}\sqrt{\half\log(1/\delta)}}{n^{1-1/a}}(\nu_n(|h|^a))^{1/a}
\qtext{with probability at least}
1-\delta.
\ealignt
\end{lemma}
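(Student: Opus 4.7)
}

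The plan is to write the deviation as a sum of independent, bounded, mean-zero random variables, apply Hoeffding's inequality to get an $L^2$-style tail bound, and then pass from $L^2$ to $L^a$ via the elementary $\ell^a \hookrightarrow \ell^2$ norm embedding that holds for $a \in [1,2]$.

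First I would rewrite
\[
\tilde{\nu}_n(h) - \nu_n(h) \;=\; \frac{1}{n\tau}\sum_{i=1}^n (B_i - \tau)\, h(x_i^n) \;\eqdef\; \sum_{i=1}^n Y_i,
\]
where $Y_i \defeq (B_i - \tau)h(x_i^n)/(n\tau)$ are independent with $\E[Y_i] = 0$. Since $B_i \in \{0,1\}$, the variable $Y_i$ takes only two values and its range (difference between maximum and minimum) is exactly $|h(x_i^n)|/(n\tau)$. Hoeffding's inequality then yields, for every $t > 0$,
\[
\P\!\left(\sum_{i=1}^n Y_i \ge t\right) \;\le\; \exp\!\left(-\frac{2t^2}{\sum_{i=1}^n |h(x_i^n)|^2/(n\tau)^2}\right) \;=\; \exp\!\left(-\frac{2 t^2 n \tau^2}{\nu_n(h^2)}\right).
\]
Setting the right side equal to $\delta$ and solving for $t$ gives the bound
\[
\tilde{\nu}_n(h) - \nu_n(h) \;\le\; \tau^{-1}\sqrt{\tfrac{1}{2}\log(1/\delta)\,\nu_n(h^2)/n}
\]
with probability at least $1-\delta$, which is exactly the claim at $a = 2$. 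The reverse-direction bound follows from the identical argument applied to $-h$ (or equivalently the lower-tail form of Hoeffding).

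To upgrade from $a=2$ to arbitrary $a \in [1,2]$, I would apply the standard monotonicity of $\ell^p$ norms on $\reals^n$: since $a \le 2$,
\[
\left(\textstyle\sum_{i=1}^n |h(x_i^n)|^2\right)^{1/2} \;\le\; \left(\textstyle\sum_{i=1}^n |h(x_i^n)|^a\right)^{1/a},
\]
which rewrites as $\sqrt{\nu_n(h^2)} \le n^{1/a - 1/2}\, \nu_n(|h|^a)^{1/a}$. Substituting this into the $a=2$ bound and collecting the powers of $n$ (using $n^{-1/2} \cdot n^{1/a - 1/2} = n^{-(1-1/a)}$) produces exactly $\frac{\tau^{-1}\sqrt{\tfrac{1}{2}\log(1/\delta)}}{n^{1-1/a}}\, \nu_n(|h|^a)^{1/a}$, as advertised.

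There is no real obstacle here: Hoeffding applies cleanly because the $Y_i$ are independent and bounded with an explicit two-sided range, and the extension to $a<2$ is a deterministic $\ell^p$ inequality. The only mild care point is that the result must hold for \emph{every} real-valued $h$, including unbounded ones, but this is automatic because the Hoeffding bound only requires boundedness of the individual increments $Y_i$, and $|h(x_i^n)|$ is finite at each of the finitely many sample locations. Hence the bound is well-defined regardless of the global behavior of $h$.
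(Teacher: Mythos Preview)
Your proposal is correct and follows essentially the same route as the paper: apply Hoeffding's inequality to the independent bounded increments $\tau^{-1}B_i h(x_i^n)$ to obtain the $a=2$ bound, then invoke the elementary $\ell^p$ embedding $\twonorm{\cdot}\le\norm{\cdot}_a$ for $a\in[1,2]$ to pass to general $a$. The only cosmetic difference is that you center the summands explicitly before invoking Hoeffding, whereas the paper phrases it as a deviation of an average from its mean; the computations and constants coincide.
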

\begin{proof}
Fix any $a \in [1,2]$, $\delta\in(0,1)$, and $h:\reals^d\to \reals$.
Since 
\balignt
\tilde{\nu}_{n}(h) = \frac{1}{n}\sum_{i=1}^n \frac{B_{i}}{\tau} h(x_i^n)
\ealignt
is an average of independent variables $\tau^{-1}B_{i}
h(x_i^n) \in \{0, \tau^{-1}h(x_i^n)\}$
with $\E[\tilde{\nu}_{n}(h)] = \nu_n(h)$, Hoeffding's inequality~\cite[Thm.~2]{Hoeffding} implies
\balignt
\tilde{\nu}_{n}(h) - \nu_n(h) 
    &\leq 
    \tau^{-1}\sqrt{\log(1/\delta)\frac{1}{2n^2}\sum_{i=1}^n h(x_i^n)^2} \qtext{with probability at least} 1-\delta \qtext{and}\\
\nu_n(h) - \tilde{\nu}_{n}(h)
    &\leq \tau^{-1}\sqrt{\log(1/\delta)\frac{1}{2n^2}\sum_{i=1}^n h(x_i^n)^2} \qtext{with probability at least} 1-\delta.
\ealignt
Moreover, since $\twonorm{\cdot} \leq \norm{\cdot}_a$, 
we have
$\sqrt{\sum_{i=1}^n h(x_i^n)^2/n^2} 
\leq (\sum_{i=1}^n |h(x_i^n)|^a/n^a)^{1/a}$, and
the advertised result follows.
\end{proof}
\\
By \cref{mu_concentration} with $a=2$,
\balignt
\sum_{n=1}^\infty \P(|\nu_{n}(h_n) - \tilde{\nu}_n(h_n)| \geq \tau^{-1}\sqrt{\frac{\log(1/\delta_n)}{2n}})
\leq \sum_{n=1}^\infty \delta_n < \infty
\ealignt
for $\delta_n = 1/(n\log^2(n))$. 
The result now follows from the Borel-Cantelli lemma.

\section{Proof of \cref{ssd-tightness}: Coercive SSDs enforce tightness}
\label{sec:proof-ssd-tightness}
Let $f(x) = \min_{\sset\in\equisubs{L}{m}} \frac{L}{m}\opsubarg{\sset}{g}{x}$.
Since $f$ is bounded below, $C = \inf_{x\in\reals^d} f(x)$ is finite.
Define
\[
\gamma(r) \defeq \inf\{ f(x) - C : \twonorm{x} \geq r\},
\] 
so that $\gamma$ is nonnegative, coercive, and non-decreasing, as $f$ is coercive.
Since $(Q_n)_{n=1}^\infty$ is not tight, there exist $\epsilon > 0$ and $R>0$ such that
$\limsup_n Q_n(\twonorm{X} > R) \geq \eps$ and $\gamma(R)\eps + C > 0$.
Moreover, since $\gamma$ is non-decreasing and nonnegative, Markov's inequality gives
\balignt
Q_n(\twonorm{X} > R)
\leq Q_n(\gamma(\twonorm{X}) > \gamma(R)) 
\leq \Esubarg{Q_n}{\gamma(\twonorm{X})}/\gamma(R)
\leq (Q_n(f)-C)/\gamma(R).
\ealignt
Meanwhile, our assumption on $g$ and the SSD subset representation \cref{ssd_subset} imply that, surely,
\balignt
Q_n(f) = \frac{1}{n}\sum_{i=1}^n f(x_i)
    \leq \frac{1}{n}\sum_{i=1}^n \frac{L}{m}\opsubarg{\sset_i}{g}{x_i}
    \leq \ssdn{Q_n}.
\ealignt
Hence, $\ssdn{Q_n}$ surely does not converge to zero, as
\balignt
\limsup_n \ssdn{Q_n}
    \geq \gamma(R)\limsup_n Q_n(\twonorm{X} > R) + C
    \geq \gamma(R)\eps + C > 0.
\ealignt

\section{Proof of \cref{coercive-ssds-detect-non-convergence}: Coercive SSDs detect non-convergence}
\label{sec:proof-coercive-ssds-detect-non-convergence}

We consider each Stein set candidate in turn.

\paragraph{Kernel Stein set}
Suppose $\gset_n$ satisfies \cref{kernel-set} for one of the specified
kernels, $k_1(x,y) = \Phi_1(x-y)$ or $k_2(x,y)=\Phi_2(x-y)$, with $\Gamma =
I_d$.

We have $\hat{\Phi}_1$ and $\hat{\Phi}_2$ are non-vanishing by
\cite[Thm. 8.15]{Wendland2004} and \cite[Lem. 7]{ChenMaGoFXOa18},
respectively. Moreover, we have for all $x,y\in\reals^d$
\balign
\inner{\grad \log p(x)-\grad \log p(y)}{x-y}
&= \textfrac{L}{m}\textstyle{L \choose m}^{-1}\sum_{\sset}
  \inner{\grad \log p_\sset(x)-\grad \log p_\sset(y)}{x-y} \\ \notag
&\le -\kappa\twonorm{x-y}^2 + r. \notag
\ealign
Hence if $Q_n \not\Rightarrow P$, then, by
\cref{bounded-sds-detect-tight-non-convergence}, either
$\langstein{Q_n}{\gset_{b,n}} \not\to 0$ or
$(Q_n)_{n=1}^\infty$ is not tight.

If $\langstein{Q_n}{\gset_{b,n}} \not\to 0$, then, with probability $1$,
$\langssdn{Q_n} \not\to 0$ by \cref{ssds-detect-bounded-sd-non-convergence}.

Now suppose $(Q_n)_{n=1}^\infty$ is not tight, and fix any $\sset\in\equisubs{L}{m}$.
Consider first the kernel $k_1$.
Since $\frac{L}{m}\grad \log p_\sigma$ has at most linear growth and
satisfies distant dissipativity, the proof of \citep[Lem. 16]{GorhamMa17} constructs a
function $g\in \gset_n$ that is independent of the choice of $\sigma$ and
satisfies $\frac{L}{m}\opsub{\sigma} g \geq f_\sigma$ for some coercive bounded-below
$f_\sigma$. Similarly, the same conclusion holds for the kernel $k_2$ by the proof of
\cite[Thm. 3]{ChenMaGoFXOa18}.
Since $\equisubs{L}{m}$ has finite cardinality,
we have $\frac{L}{m}\opsub{\sigma}g \geq f$ for a common coercive bounded-below function $f(x) \defeq
\min_{\sset} f_\sigma(x)$.
Therefore, surely, $\langssdn{Q_n} \not\to 0$ by \cref{ssd-tightness}.

To extend this result to any $\Gamma \succ 0$, fix some $\Gamma \succ 0$.
For any distribution $P$ on $\reals^d$, let us write $\Gamma^{-1}P$ to
represent the distribution of $\Gamma^{-1} Z$ when $Z\sim P$.
Let $p_{\Gamma}$ be the density $\Gamma^{-1}P$.
Then $p_{\Gamma}(x) = \text{det}(\Gamma)\grad\log p(\Gamma x)$ and
$\grad\log p_{\Gamma}(x) = \Gamma\grad\log p(\Gamma x)$, and for any
$\sset\in\equisubs{L}{m}$, the analog $p_{\Gamma,\sset}$ of $p_{\Gamma}$ satisfies
$p_{\Gamma,\sset}(x) = \text{det}(\Gamma)\grad\log p_\sset(\Gamma x)$ and
$\grad\log p_{\Gamma,\sset}(x) = \Gamma\grad\log p_\sset(\Gamma x)$.
By the same argument made in \cite[Lem. 4]{ChenBaFXGoGiMaOa19}, we have that
$\grad\log p_{\Gamma}$ is Lipschitz and $\grad\log p_{\Gamma,\sset}$
satisfies distant dissipativity. And since
\balign
\frac{\twonorm{\grad\log p_{\Gamma,\sset}(x)}}{1 +
    \twonorm{x}}
  = \frac{\twonorm{\Gamma \grad\log p_\sset(\Gamma x)}}{1 + \twonorm{\Gamma x}} \frac{1 +
    \twonorm{\Gamma x}}{1 + \twonorm{x}}
  \le \opnorm{\Gamma}(1 + \opnorm{\Gamma}) \frac{\twonorm{\grad\log p_\sset(\Gamma x)}}{1 + \twonorm{\Gamma
      x}} \notag
\ealign
is uniformly bounded, we can apply the same argument discussed in \cite[Lem. 4]{ChenBaFXGoGiMaOa19},
i.e., make a global change of coordinates $x \mapsto \Gamma^{-1} x$ and
then invoke \cref{coercive-ssds-detect-non-convergence} for $\Gamma^{-1}
P$ and $\Gamma^{-1}Q_n$ with a non-preconditioned kernel, thereby concluding
the proof.

\paragraph{Classical Stein set}
Suppose $\gset_n = \steinset$ satisfies \cref{classical-set}. By the proof
of \cref{bounded-sds-detect-tight-non-convergence}, for $\Gamma = I$ and any
$\beta\in(-1, 0)$, there is a constant $c_0 > 0$ such that the kernel Stein
set $\ksteinsetnorm{k}{\twonorm{\cdot}} \subseteq c_0 \gset_n$. Hence
$\mathcal{SS}(Q_n,\langevin{},{\ksteinsetnorm{k}{\twonorm{\cdot}}}) \le c_0
\langssdn{Q_n}$ for all $n$ implying the result.

\paragraph{Graph Stein set}
Suppose $\gset_n$ satisfies \cref{graph-set}. Then the result follows as
$\gset_n$ contains the classical Stein set $\steinset$.

\section{Proof of \cref{ssvgd-converges}: Wasserstein convergence of SVGD and SSVGD}
\label{sec:proof-ssvgd-converges}

\subsection{Additional notation} 
For each $\eps > 0$ and collection of $n$ points $(x_i^n)_{i=1}^n$ with associated discrete measure $\nu_n = \frac{1}{n}\sum_{i=1}^n \delta_{x_i^n}$, we define the random one-step SSVGD mapping
\balignt
T_{\nu_n, \eps, n}^m(x) 
    &= x + \eps \frac{1}{n}\sum_{j=1}^n\frac{L}{m}\grad \log p_{\sig_{j}}(x^n_j)k(x^n_j,x) + \grad_{x^n_j} k(x^n_j, x)
\ealignt
for $(\sig_{j})_{j=1}^n$ independent uniformly random size-$m$ subsets of $[L]$. 
We also let $\Phi_{\eps,n}^m(\mu)$ denote the random distribution of $T_{\nu_n,\eps,n}^m(X)$ when $X\sim \mu$.

\subsection{Proof of \cref{ssvgd-converges}}
We will prove each convergence claim by induction on $r \geq 0$.
\paragraph{Inductive proof of $\lswass{1}(Q_{n,r}, Q_{\infty,r}) \to 0$}
For our base case we have $\lswass{1}(Q_{n,0}, Q_{\infty,0}) \to 0$ by assumption.

Now, fix any $r \geq 0$ and assume $\lswass{1}(Q_{n,r}, Q_{\infty,r}) \to 0$, so that $c_0(1+\twonorm{\cdot})$ is uniformly $Q_{n,r}$-integrable and $Q_{n,\infty}$-integrable by \citep[Proof~of~Cor.~1]{ekisheva2006transportation}.
Therefore, there exists a constant $C' > 0$ such that
\balignt\label{eq:ui-mean-bound}
 \sup_{n\geq 1} 1+\eps_r c_1(1+ Q_{n,r}(\twonorm{\cdot})) + \eps_r c_2 (1+ Q_{\infty,r}(\twonorm{\cdot})) \leq C'.
\ealignt
Now, note that 
\balignt
\lswass{1}(Q_{n,r+1}, Q_{\infty,r+1})
    &= \lswass{1}(\Phi_{\eps_r}(Q_{n,r}), \Phi_{\eps_r}(Q_{\infty,r})).
\ealignt
To control this expression, we provide a lemma, proved in \cref{sec:proof-svgd-pseudolip}, which establishes the pseudo-Lipschitzness of the one-step SVGD mapping $\Phi_\eps$.
\begin{lemma}[Wasserstein pseudo-Lipschitzness of SVGD]
\label{svgd-pseudolip}
Suppose that, for some $c_1,c_2 > 0$,
\balignt
&\sup_{z\in\reals^d} \opnorm{\grad_z (\grad \log p(x)k(x,z) + \grad_x k(x,z))} \leq c_1(1+\twonorm{x}) \qtext{and}\\
&\sup_{x\in\reals^d} \opnorm{\grad_x (\grad \log p(x)k(x,z) + \grad_x k(x,z))} \leq c_2(1+\twonorm{z}).
\ealignt
Then, for any $\eps > 0$ and probability measures $\mu, \nu$,
\balignt
\lswass{1}(\Phi_\eps(\mu),\Phi_\eps(\nu))
    \leq \lswass{1}(\mu,\nu) 
    (1+\eps c_1(1+ \mu(\twonorm{\cdot})) + \eps c_2 (1+ \nu(\twonorm{\cdot}))).
\ealignt
\end{lemma}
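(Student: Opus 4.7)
The plan is to bound the Wasserstein-1 distance by exhibiting an explicit coupling of $\Phi_\eps(\mu)$ and $\Phi_\eps(\nu)$ obtained by pushing an optimal coupling of $(\mu,\nu)$ through the one-step mappings $T_{\mu,\eps}$ and $T_{\nu,\eps}$. First I would set $f(x,z) \defeq \grad \log p(x) k(x,z) + \grad_x k(x,z)$ and translate the two hypotheses into pseudo-Lipschitz bounds on $f$ via the mean value inequality: $\twonorm{f(x,z_1) - f(x,z_2)} \le c_1(1+\twonorm{x})\twonorm{z_1-z_2}$ and $\twonorm{f(x_1,z) - f(x_2,z)} \le c_2(1+\twonorm{z})\twonorm{x_1-x_2}$.

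Next I would introduce two independent optimal couplings $(X,Y)$ and $(X',Y')$ of $(\mu,\nu)$, each attaining $\E[\twonorm{X-Y}] = \E[\twonorm{X'-Y'}] = \lswass{1}(\mu,\nu)$. The joint distribution of $(T_{\mu,\eps}(X), T_{\nu,\eps}(Y))$ then gives a valid (not necessarily optimal) coupling of $(\Phi_\eps(\mu), \Phi_\eps(\nu))$, so
\begin{align*}
\lswass{1}(\Phi_\eps(\mu),\Phi_\eps(\nu)) \le \E\bigl[\twonorm{T_{\mu,\eps}(X) - T_{\nu,\eps}(Y)}\bigr].
\end{align*}
Substituting the definition of $T_{\mu,\eps}$ and $T_{\nu,\eps}$, rewriting the $\mu$ and $\nu$ expectations as expectations over $(X',Y')$ independent of $(X,Y)$, and applying Jensen's inequality gives
\begin{align*}
\E\bigl[\twonorm{T_{\mu,\eps}(X) - T_{\nu,\eps}(Y)}\bigr]
\le \E[\twonorm{X-Y}] + \eps \, \E\bigl[\twonorm{f(X',X) - f(Y',Y)}\bigr].
\end{align*}

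The key step is then to add and subtract $f(X',Y)$ and apply the two pseudo-Lipschitz bounds:
\begin{align*}
\twonorm{f(X',X) - f(Y',Y)} \le c_1(1+\twonorm{X'})\twonorm{X-Y} + c_2(1+\twonorm{Y})\twonorm{X'-Y'}.
\end{align*}
Taking expectations and invoking independence of $(X',Y')$ and $(X,Y)$, the cross expectations factor as $\E[1+\twonorm{X'}]\E[\twonorm{X-Y}] = (1+\mu(\twonorm{\cdot}))\lswass{1}(\mu,\nu)$ and $\E[1+\twonorm{Y}]\E[\twonorm{X'-Y'}] = (1+\nu(\twonorm{\cdot}))\lswass{1}(\mu,\nu)$, yielding exactly the advertised bound.

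I do not expect any serious obstacle: the argument is just a careful coupling computation combined with the hypotheses rewritten in integrated form. The only subtlety is to remember to use two independent copies of the optimal coupling rather than a single one, so that the cross terms $\E[\twonorm{X'} \twonorm{X-Y}]$ factor cleanly through independence; without this independence one would get a Cauchy-Schwarz-style bound with the wrong moments on the right hand side.
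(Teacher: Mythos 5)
Your proof is correct and takes essentially the same route as the paper: push an optimal $\lswass{1}$ coupling through the one-step maps, add and subtract the cross term $f(X',Y)$, apply Jensen and the two pseudo-Lipschitz bounds. The only difference is that the paper first integrates out the inner expectation over a single optimal coupling to obtain a deterministic pointwise bound on $\twonorm{T_{\mu,\eps}(x) - T_{\nu,\eps}(z)}$ and then evaluates it at the same coupled pair, so the two-independent-copies device you flag as essential is a valid but unnecessary precaution.
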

Our pseudo-Lipschitz assumptions \cref{eq:M1-lin-growth} and \cref{svgd-pseudolip} imply
\begin{talign}
\lswass{1}(\Phi_{\eps_r}(Q_{n,r}), \Phi_{\eps_r}(Q_{\infty,r}))
    &\leq 
    \lswass{1}(Q_{n,r}, Q_{\infty,r})
    (1+\eps_r c_1(1+ Q_{n,r}(\twonorm{\cdot})) + \eps_r c_2 (1+ Q_{\infty,r}(\twonorm{\cdot}))) \\
    &\leq 
    C'\lswass{1}(Q_{n,r}, Q_{\infty,r}) \to 0,
\end{talign}
proving our first claim.

\paragraph{Inductive proof of $\lswass{1}(Q_{n,r}^m, Q_{n,r}) \to 0$}
For our base case we have,  $\lswass{1}(Q_{n,0}^m, Q_{n,0}) = 0$.

Now fix any $r\geq0$, let $\event$ be the event on which  $\lswass{1}(Q_{n,r}^m, Q_{n,r}) \to 0$
 as $n\to \infty$, and assume $\P(\event) = 1$.
 Since $\lswass{1}(Q_{n,r}, Q_{\infty,r}) \to 0$, on $\event$ we find that $\lswass{1}(Q_{n,r}^m, Q_{\infty,r}) \to 0$ and hence $c_0(1+\twonorm{\cdot})$ is uniformly $Q_{n,r}^m$-integrable and uniformly $Q_{n,r}$-integrable by \citep[Proof~of~Cor.~1]{ekisheva2006transportation}.
 Therefore, on $\event$, there exists a constant $C$ such that
 \balignt\label{eq:ui-mean-bound}
 \sup_{n\geq 1} 1+\eps_r c_1(1+ Q_{n,r}^m(\twonorm{\cdot})) + \eps_r c_2 (1+ Q_{n,r}(\twonorm{\cdot})) \leq C.
 \ealignt

By the triangle inequality,
\balignt
\lswass{1}(Q_{n,r+1}^m, Q_{n,r+1})
    &= \lswass{1}(\Phi_{\eps_r,n}^m(Q_{n,r}^m), \Phi_{\eps_r}(Q_{n,r})) \\
    &\leq 
    \lswass{1}(\Phi_{\eps_r,n}^m(Q_{n,r}^m), \Phi_{\eps_r}(Q_{n,r}^m))
    + 
    \lswass{1}(\Phi_{\eps_r}(Q_{n,r}^m), \Phi_{\eps_r}(Q_{n,r})). \label{eq:wass-ssvgd-svgd-triangle}
\ealignt
On $\event$, our growth assumptions \cref{eq:M0-lin-growth}, the uniformly $Q_{n,r}^m$-integrability of $c_0(1+\twonorm{\cdot})$, and the following lemma, proved in \cref{sec:proof-one-step-ssvgd-convergence}, establish that the Wasserstein distance $\lswass{1}(\Phi_{\eps_r,n}^m(Q_{n,r}^m), \Phi_{\eps_r}(Q_{n,r}^m))$ between one step of SSVGD  and one step of SVGD from a common starting point converges to $0$ almost surely as $n$ grows.
\begin{lemma}[One-step convergence of SSVGD to SVGD]
\label{one-step-ssvgd-convergence}
Fix any triangular array of points $(x_i^n)_{i\in[n], n \geq 1}$ in $\reals^d$,
and define the discrete probability measures 
$\nu_n = \frac{1}{n} \sum_{i=1}^n \delta_{x_i^n}.$
Suppose $\grad \log p_{\sset}(\cdot) k(\cdot,z)$ is continuous for each $z\in\reals^d$ and $\sset \in \equisubs{L}{m}$ and let
\balignt
f_0(x) &\defeq \sup_{z\in\reals^d,\sset \in \equisubs{L}{m}} \infnorm{\grad
  \log p_{\sset}(x)} |k(x,z)|, \\
f_1(x) &\defeq \sup_{z\in\reals^d,\sset
  \in \equisubs{L}{m}} \opnorm{\grad_x (\grad \log p_{\sset}(x)k(x,z))}.
\ealignt 
If $f_0$ is $\nu_n$-uniformly integrable and $f_0,f_1$ are bounded on each compact
set, then, for any $\eps > 0$,
$\lswass{1}(\Phi_{\eps,n}^m(\nu_n), \Phi_\eps(\nu_n)) \toas 0$ as $n\to\infty$.
\end{lemma}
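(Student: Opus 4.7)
I will couple $\Phi_{\eps,n}^m(\nu_n)$ and $\Phi_\eps(\nu_n)$ through a shared input $X\sim\nu_n$ that is independent of the $(\sigma_j)_{j=1}^n$. Since the kernel-gradient piece $\grad_{x_j^n} k(x_j^n,x)$ is identical in both transports, their difference depends only on $Z_j^n \defeq \textfrac{L}{m}\grad\log p_{\sigma_j}(x_j^n) - \grad\log p(x_j^n)$. Conditioning on $(\sigma_j)_{j=1}^n$ and pushing $X\sim\nu_n$ through both maps gives
\balignt
\lswass{1}(\Phi_{\eps,n}^m(\nu_n),\Phi_\eps(\nu_n))
    \le \frac{\eps}{n}\sum_{i=1}^n \twonorm{R_i^n},
\qtext{where}
R_i^n \defeq \frac{1}{n}\sum_{j=1}^n Z_j^n\,k(x_j^n,x_i^n).
\ealignt
The $Z_j^n$ are independent in $j$, each with mean zero since $\E[\grad\log p_{\sigma_j}(x_j^n)]=\textfrac{m}{L}\grad\log p(x_j^n)$.

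\textbf{Truncation by $f_0$.} Using $\grad\log p = \textfrac{L}{m}\E_\sigma[\grad\log p_\sigma]$, the triangle inequality, and norm equivalence, each summand satisfies $\twonorm{Z_j^n k(x_j^n,x_i^n)} \le C f_0(x_j^n)$ with $C \defeq 2\sqrt{d}\,L/m$. To address the potential unboundedness of $f_0$, fix $M>0$ and split the sum defining $R_i^n$ at the index set $\mathcal{J}_M^n \defeq \{j: f_0(x_j^n)\le M\}$. The tail contribution is deterministically bounded, uniformly in $i$, by $C\,\nu_n(f_0\,\indic{f_0 > M})$, which the uniform $\nu_n$-integrability of $f_0$ makes simultaneously smaller than any prescribed $\delta/2$ over all $n$ once $M$ is chosen large enough.

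\textbf{Concentration plus Borel-Cantelli for the bounded part.} Restricted to $\mathcal{J}_M^n$, the summands are independent mean-zero random vectors with $\twonorm{\cdot}\le CM$, so a vector Hoeffding inequality (coordinatewise Hoeffding and a union bound over $[d]$ suffices) yields, for each fixed $i$,
\balignt
\P\Bigl(\twonorm{\tfrac{1}{n}\textstyle\sum_{j\in\mathcal{J}_M^n} Z_j^n k(x_j^n,x_i^n)} > KM\sqrt{\textfrac{\log n}{n}}\Bigr)
    \le C_1 n^{-cK^2}.
\ealignt
A union bound over $i\in[n]$ and the Borel-Cantelli lemma (for $K$ sufficiently large) show that the maximum over $i$ of the bounded-part norm tends to $0$ almost surely as $n\to\infty$. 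Assembling the two pieces: for each $\delta>0$ there is an event of probability $1$ on which $\frac{1}{n}\sum_i \twonorm{R_i^n} < \delta$ eventually; intersecting over $\delta = 1/k$ for $k\in\naturals$ yields the claimed almost sure convergence.

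\textbf{Main obstacle.} The principal difficulty is extracting almost sure (rather than merely in-probability) convergence when $f_0$ is only assumed $\nu_n$-uniformly integrable, not uniformly bounded or in $L^2$. The truncate-and-concentrate scheme threads this needle: the across-$j$ independence of the $\sigma_j$ lets Hoeffding deliver a $\sqrt{\log n / n}$ rate fast enough for Borel-Cantelli, while uniform integrability controls the tail uniformly in $n$. The continuity hypothesis on $\grad\log p_\sigma(\cdot) k(\cdot,z)$ together with local boundedness of $f_0,f_1$ is used only to guarantee Borel measurability and well-definedness of the random transports.
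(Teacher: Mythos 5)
Your proposal is correct, and it reaches the conclusion by a genuinely different route than the paper. The paper bounds $\lswass{1}(\Phi_{\eps,n}^m(\nu_n),\Phi_\eps(\nu_n))$ by $\eps\sqrt{d}{L\choose m}^{-1}\sum_{\sset}\sup_{h\in\hset_f}|\nu_{n\sset}(h)-\textfrac{L}{m}\nu_n(h)|$, where the test functions $\grad\log p_\sset(\cdot)k(\cdot,x_i^n)$ are absorbed into the envelope class $\hset_f$ (dominated by $f_0$, with modulus of continuity controlled by $f_1$), and then invokes \cref{random_measure_convergence} — a uniform-over-class convergence lemma proved via an Arzel\`a--Ascoli covering of $\hset_f$ on compact balls plus a compact-set truncation using uniform integrability — together with the Hoeffding/Borel--Cantelli argument of \cref{bounded_func_convergence}. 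This is why the paper's proof genuinely uses the continuity hypothesis and the local boundedness of $f_1$: they furnish the equicontinuous class needed for the covering step (and the same machinery is reused in the proof of \cref{ssds-detect-convergence}, so the paper gets it at no extra cost). You instead exploit that at stage $n$ only the $n$ evaluation points $x_i^n$ appear: you center the subsampled gradients directly ($Z_j^n$ mean zero, bounded by $2\sqrt{d}\textfrac{L}{m}f_0(x_j^n)$), truncate on the level sets of $f_0$ using uniform integrability, apply coordinatewise Hoeffding to the deterministic index set $\mathcal{J}_M^n$ for each fixed $i$, and union-bound over $i\in[n]$ and the $d$ coordinates before Borel--Cantelli; the polynomial $n^{-cK^2}$ tail absorbs the factor $n$ for $K$ large. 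This bypasses the function-class/equicontinuity step entirely, so in your argument continuity and $f_1$ serve only to guarantee measurability, and you additionally obtain an explicit $O(M\sqrt{\log n/n})$ rate for the truncated part. In short: the same probabilistic core (Hoeffding, Borel--Cantelli, truncation via uniform integrability), but your organization is more elementary and self-contained and uses strictly weaker hypotheses, while the paper's buys reuse of a general random-measure convergence lemma needed elsewhere in its theory.
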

In addition, on $\event$, our pseudo-Lipschitz assumptions \cref{eq:M1-lin-growth} and \cref{svgd-pseudolip} imply
\begin{talign}
\lswass{1}(\Phi_{\eps_r}(Q_{n,r}^m), \Phi_{\eps_r}(Q_{n,r}))
    &\leq 
    \lswass{1}(Q_{n,r}^m, Q_{n,r})
    (1+\eps c_1(1+ Q_{n,r}^m(\twonorm{\cdot})) + \eps c_2 (1+ Q_{n,r}(\twonorm{\cdot}))) \\
    &\leq 
    C\lswass{1}(Q_{n,r}^m, Q_{n,r}) \to 0.
\end{talign}
Hence, on $\event$, $\lswass{1}(Q_{n,r+1}^m, Q_{n,r+1}) \toas 0$, proving our second claim.

\subsection{Proof of \cref{svgd-pseudolip}: Wasserstein pseudo-Lipschitzness of SVGD}
\label{sec:proof-svgd-pseudolip}
Assume that $\mu$ and $\nu$ have integrable means (or else the advertised claim is vacuous), and select $(X',Z')$ to be an optimal $1$-Wasserstein coupling of $(\mu, \nu)$.
The triangle inequality, Jensen's inequality, and our pseudo-Lipschitzness assumptions imply that
\balignt
&\twonorm{T_{\mu,\eps}(x) - T_{\nu,\eps}(z)} \\
    &\leq \twonorm{x - z} \\
    &+ \eps \twonorm{\Earg{\grad\log p(X')k(X',x) + \grad_{x'} k(X', x) - (\grad\log p(X')k(X',z) + \grad k(X', z))}} \\
    &+ \eps \twonorm{\Earg{\grad\log p(X')k(X',z) + \grad_{x'} k(X', z) - (\grad\log p(Z')k(Z',z) + \grad_{z'} k(Z', z))}} \\
   &\leq \twonorm{x - z} (1+\eps c_1(1+ \E[\twonorm{X'}])) + \eps c_2 \Earg{\twonorm{X' - Z'}}(1+ \twonorm{z}) \\
   &= \twonorm{x - z} (1+\eps c_1(1+ \mu(\twonorm{\cdot})) + \eps c_2 \lswass{1}(\mu,\nu)(1+ \twonorm{z}).
\ealignt
Since $T_{\mu,\eps}(X') \sim \Phi_\eps(\mu)$ and $T_{\nu,\eps}(Z') \sim \Phi_\eps(\nu)$, we conclude that
\balignt
\lswass{1}(\Phi_\eps(\mu),\Phi_\eps(\nu))
    &\leq
    \E[\twonorm{T_{\mu,\eps}(X') - T_{\nu,\eps}(Z')}] \\
    &\leq \E[\twonorm{X' - Z'}] (1+\eps c_1(1+ \mu(\twonorm{\cdot})) + \eps c_2 \lswass{1}(\mu,\nu)(1+ \E[\twonorm{Z'}])\\
    &= \lswass{1}(\mu,\nu) 
    (1+\eps c_1(1+ \mu(\twonorm{\cdot})) + \eps c_2 (1+ \nu(\twonorm{\cdot}))).
\ealignt

\subsection{Proof of \cref{one-step-ssvgd-convergence}: One-step convergence of SSVGD to SVGD}
\label{sec:proof-one-step-ssvgd-convergence}
Note that the random one-step SSVGD mapping takes the form
\balignt
T_{\nu_n, \eps, n}^m(x) 
    &= x + \eps \nu_n(\grad_{x^n_j} k(\cdot, x)) + \eps {L\choose m}^{-1}\sum_{\sset\in\equisubs{L}{m}}\nu_{n\sset}(\grad \log p_{\sset}(\cdot) k(\cdot,x))
\ealignt
for $\nu_{n\sset} = {L\choose m}\textfrac{L}{m}\frac{1}{n}\sum_{j=1}^nB_{j\sset}\delta_{x^n_j}$
and $B_{j\sset} = \indic{\sset = \sset_{j}}$.
Moreover, by Kantorovich-Rubinstein duality, we may write the $1$-Wasserstein distance as
\balignt
\lswass{1}(&\Phi_{\eps,n}^m(\nu_n), \Phi_\eps(\nu_n)) \\
    &= \sup_{f : M_1(f) \leq 1} \Phi_{\eps,n}^m(\nu_n)(f) - \Phi_\eps(\nu_n)(f) \\
    &= \sup_{f : M_1(f) \leq 1} \frac{1}{n} \sum_{i=1}^n f(T_{\nu_n, \eps, n}^m(x^n_i)) - f(T_{\nu_n, \eps}(x^n_i)) \\
    &\leq \frac{1}{n} \sum_{i=1}^n \twonorm{T_{\nu_n, \eps, n}^m(x^n_i) - T_{\nu_n, \eps}(x^n_i)}\\
    &= {L\choose m}^{-1}\frac{\eps }{n} \sum_{i=1}^n 
    \twonorm{\sum_{\sset} \textfrac{L}{m}\nu_n(\grad \log p_\sset(\cdot)
      k(\cdot,x^n_i)) - \nu_{n\sset}(\grad \log p_\sset(\cdot) k(\cdot,x^n_i))}\\
    & \leq {L\choose m}^{-1}\sum_{\sset} \frac{\eps \sqrt{d}}{n} \sum_{i=1}^n 
    \infnorm{\textfrac{L}{m}\nu_n(\grad \log p_\sset(\cdot) k(\cdot,x^n_i)) - \nu_{n\sset}(\grad \log p_\sset(\cdot) k(\cdot,x^n_i))} \\
    &\leq \eps\sqrt{d}{L\choose m}^{-1}\sum_{\sset}
\sup_{h \in \hset_f} |\nu_{n\sset}(h) - \textfrac{L}{m}\nu_n(h)|. \label{eq:one-step-ssvgd-convergence-bound}
\ealignt
where we have used the triangle inequality and norm relation
$\twonorm{\cdot}\leq \sqrt{d}\infnorm{\cdot}$ in the penultimate display and
$\hset_f$ is defined in the statement of \cref{random_measure_convergence}.

For each $\sset\in\equisubs{L}{m}$, since $|f_0|$ is uniformly
$\nu_n$-integrable, and $\nu_{n\sset}(|f_0| I_K) \leq {L\choose m}\frac{L}{m} \nu_n(|f_0| I_K)$ for every compact set $K$, we find that $|f_0|$ is uniformly $\nu_{n\sset}$-integrable for each $\sset$.
Letting $I_{B_R}(x) = \indic{\twonorm{x}\leq R}$, for each $\sset$, since $\nu_{n\sset}(hI_{B_R}) - \textfrac{L}{m}\nu_n(hI_{B_R}) \toas 0$ for 
 any $R > 0$ and any bounded $h$ by \cref{bounded_func_convergence}, we have $\sup_{h \in \hset_f}
|\nu_{n\sset}(h) - \textfrac{L}{m}\nu_n(h)| \toas 0$ by \cref{random_measure_convergence}.
The result now follows from the bound \cref{eq:one-step-ssvgd-convergence-bound}.

\end{document}